\documentclass{article}
\usepackage[a4paper, margin=2.5cm]{geometry}

\usepackage{times}
\usepackage{url}
\usepackage[utf8]{inputenc}
\usepackage[small]{caption}
\usepackage{graphicx}
\usepackage{amsmath,amssymb}
\usepackage{amsthm}
\usepackage{booktabs}
\usepackage{algorithm}
\usepackage{algorithmic}
\usepackage[switch]{lineno}

\urlstyle{same}

\usepackage{array}
\usepackage{amsfonts}       
\usepackage{amsthm}
\usepackage{amsmath}
\usepackage{algorithm}
\usepackage{algorithmic}
\usepackage{theoremref}
\usepackage[table]{xcolor}
\usepackage{titlesec}
\usepackage{multirow}
\usepackage{diagbox}
\usepackage{threeparttable}
\usepackage{verbatim}
\usepackage{booktabs}
\usepackage{subcaption}
\usepackage{setspace}

\newtheorem{definition}{Definition}

\newtheorem{observation}{Observation}

\newtheorem{assumption}{Assumption}

\newtheorem{lemma}{Lemma}

\newcommand{\pa}{\operatorname{PA}}

\newcommand{\pc}{\operatorname{PC}}

\newcommand{\cde}{\operatorname{CDE}}

\newcommand{\cate}{\operatorname{CATE}}
\newcommand{\ate}{\operatorname{ATE}}

\renewcommand{\emph}{\textit}
\renewcommand{\em}{\textit}

\makeatletter
\newcommand*{\indep}{%
	\mathbin{%
		\mathpalette{\@indep}{}%
	}%
}

\newcommand*{\nindep}{%
	\mathbin{
		\mathpalette{\@indep}{/}%
	}%
}

\newcommand*{\@indep}[2]{%
	\sbox0{$#1\perp\m@th$}
	\sbox2{$#1=$}
	\sbox4{$#1\vcenter{}$}
	\rlap{\copy0}
	\dimen@=\dimexpr\ht2-\ht4-.2pt\relax
	\kern\dimen@
	\ifx\\#2\\%
	\else
	\hbox to \wd2{\hss$#1#2\m@th$\hss}%
	\kern-\wd2 %
	\fi
	\kern\dimen@
	\copy0 
}
\makeatother


\newtheorem{example}{Example}
\newtheorem{theorem}{Theorem}

\title{Linking a predictive model to causal effect estimation}

\author{
    Anonymous
}

\author{
Jiuyong Li
\and
Lin Liu \and
Ziqi Xu \and
Ha Xuan Tran \and
Thuc Duy Le \and
Jixue Liu \\ \\
University of South Australia. Adelaide, Australia
}

\begin{document}

\date{}

\maketitle

\begin{abstract}

A predictive model makes outcome predictions based on some given features, i.e., it estimates the conditional probability of the outcome given a feature vector. In general, a predictive model cannot estimate the causal effect of a feature on the outcome, i.e., how the outcome will change if the feature is changed while keeping the values of other features unchanged. This is because causal effect estimation requires interventional probabilities. However, many real world problems such as personalised decision making, recommendation, and fairness computing, need to know the causal effect of any  feature on the outcome for a given instance. This is different from the traditional causal effect estimation  problem with a fixed treatment variable. This paper first tackles the challenge of estimating the causal effect of any feature (as the treatment) on the outcome w.r.t. a given instance. The theoretical results naturally link a predictive model to causal effect estimations and imply that a predictive model is causally interpretable when the conditions identified in the paper are satisfied. The paper also reveals the robust property of a causally interpretable model.     
We use experiments to demonstrate that various types of predictive models, when satisfying the conditions identified in this paper, can estimate the causal effects of features as accurately as state-of-the-art causal effect estimation methods. We also show the potential of such causally interpretable predictive models for robust predictions and personalised decision making.

	
\end{abstract}

\section{Introduction}

Most supervised learning methods are designed to estimate conditional probability $P(Y \mid \mathbf{X = x})$ where $\mathbf{X}$ is known as the set of features (variables) and $Y$ the outcome (or the target variable). A large number of such predictive models have been developed for outcome predictions and they can model conditional probabilities accurately in various data sets. 

However, many decision making problems require answers to what-if questions regarding any feature and the outcome. If $X_i \in \mathbf{X}$ is changed from 0 to 1, how will $Y$ be changed?  For example, will a job applicant get the job if the applicant has a college degree? Will a customer buy the product if the customer receives the coupon?  Answering such what-if questions needs the estimation of the causal effect of the corresponding feature (e.g. college degree or coupon) on the outcome (e.g. getting the job or buying the product). The causal effect of a feature (as a treatment) on the outcome means the change of the outcome due to the change of the feature~\cite{ImbensRubin2015_Book,Pearl2009_Book}.



Causal effect estimation is a main topic in the causal inference area, and most causal effect estimation methods assume a fixed treatment variable and a given set of covariates~\cite{ImbensRubin2015_Book,hernan2010causal,CHM-Evaluation}. Several Conditional Average Treatment Effect (CATE) estimation methods have been proposed for personalised decision making ~\cite{AtheyImbens2016_PNAS,wager2018estimation,Shalit2016,CEVAE2017,zhang2021treatment}, and they all deal with a fixed treatment variable and a given set of covariates.    

In many problem settings, there is not a fixed treatment variable, and therefore, these causal effect estimation methods cannot be applied to such a problem. For example, with a data set containing people's lifestyles, diets and etc. (features), and heart attacks in their 60s (the outcome). A predictive model can be trained on the data set to predict the risk of a heart attack in their 60s based on their feature values. Furthermore, one may wish to know what lifestyle change or diet change she/he should undertake now to reduce the risk of a heart attack in future. Each of lifestyle and diet features can be a treatment and a covariate when it is not a treatment. 

Identifying the best possible treatment for an individual is a key part for the solution in personalised decision making. During the search for the best possible treatment for an individual, features  are alternated as treatments to have their respective causal effects evaluated. When a feature is considered as the treatment, an appropriate covariate set for the treatment and the outcome needs to be determined. This makes an existing  causal effect estimation method not applicable since it assumes a fixed treatment variable and a given covariate set for the treatment and the outcome~\cite{ImbensRubin2015_Book,hernan2010causal}. 
Existing treatment effect estimation methods need to be extended to a predictive machine learning setting where each feature can be a treatment and there is not a designation of a fixed treatment and a covariate set.  

In this paper, firstly we deal with the challenge to causal effect estimation when alternating features as treatments. It is not a straightforward extension of existing causal effect estimation methods when alternating treatment variables. We show the challenges in the following example. Consider a traditional setting where a data set contains the treatment variable $T$, a covariate variable $X$, and the outcome $Y$. This implies that the causal graph underlying the data set is $T \leftarrow X \to Y$ and $T \to Y$. In the problem setting of this paper, both $(T, X)$ are features and can be treatments alternatively. To estimate the causal effect of $T$ on $Y$ conditioned on $X=x$, the conditional average causal effect (CATE) of $T$ on $Y$ conditioned on $X=x$  is used. However, when $X$ is alternated to be the treatment, $T$ is not a covariate of the variable pair $(X, Y)$ but a mediator between $X$ and $Y$. In this case, a CATE estimator is not applicable for estimating the causal effect of $X$ on $Y$ conditioned on $T$ since, when changing $X$ as the treatment, $T$ will be changed too. Instead, the Controlled Direct Effect (CDE) of $X$ on $Y$ when $T$ is controlled to $T=t$ is appropriate. 

We face two challenges as shown in the above example. Firstly, when alternating features as treatments, the causal effects for different features may not be of the same type. Secondly, the choice of the right type of causal effect for a treatment needs the underlying causal graph. So, in general, we cannot estimate the causal effect of individual features on the outcome by alternating them as treatments if we do not know the underlying causal graph. In practice, the causal graph is rarely known. In this paper, we reduce the requirement of knowing the causal graph to the requirement of knowing that the data set contains all direct causes of the outcome and no variables affected by the outcome. Hidden variables which are not the direct causes are allowed. This makes our proposed approach more practical than methods assuming knowing the causal graph for estimating the causal effects of any feature on the outcome.  

Secondly, the theoretical results of the above solution naturally link a predictive model to causal effect estimation. i.e., when the conditions are satisfied, causal effects can be directly estimated from a predictive model. The results imply that we can have a causally interpretable predictive model where causal effects of individual features can be derived from the model. The causal interpretation does not depend on the transparency of the predictive model, but depends on what inputs to the model are. The predictive model itself can be black-box, but the effect of each feature on the outcome can be derived from the model and hence the model is causally interpretable. 

The connection between predictive models and  causal interpretation contributes to model explanation. It is desirable to interpret a predictive model causally, but it is generally impossible. As a predictive model, the coefficients in linear regression models are often interpreted as causal effects of the corresponding features on the outcome. However, such an interpretation is valid only when the underlying causal structure is known and the regression follows the causal structure properly. This is because the coefficients may represent different types of causal effects, or even may be biased estimations~\cite{westreich2013table}.  Furthermore, even if the causal interpretation is valid, the coefficients in linear regression models indicate the Average Treatment Effects (ATE)  at population level, and are not suitable for personalised decision making discussed in this paper. 

Thirdly, we further study the property of the above causally interpretable model, and show that it is robust when the environment is changed from which the model is trained.     







In summary, this work makes the following contributions. 




\begin{enumerate}
	
	
\item For causal effect estimation of any feature on the outcome in a given circumstance, normally the causal graph underlying the data set is needed. This paper reduces the requirement of knowing the complete causal graph to that of knowing that the data set contains all direct causes of the outcome and no variables affected by the outcome. Hidden variables are allowed as long as they are not direct causes of the outcome. The relaxed assumptions lead to more practical methods for causal effect estimation of any feature on the outcome. 

	
	\item 
The work links a predictive model to causal effect estimation.  When the conditions identified in the paper are satisfied, a predictive model can be used for causal effect estimation and hence causally interpretable. We use experiments to show that various types of predictive models estimate causal effects as accurately as state-of-the-art causal effect estimation methods.   

	
\item We analyse and demonstrate that a causally interpretable model is more robust than other predictive models using all features in a new environment that is different from the one in which the model is trained. 

\end{enumerate}



	\section{Graphical causal models}
	\label{sec:graphicalvausalModels}
	In this section, we present the necessary background of causal inference. 
	We use upper case letters to represent variables and bold-faced upper case letters to denote sets of variables. The values of variables are represented using lower case letters.
	
	Let $\mathcal{G}=(\mathbf{V},\mathbf{E})$ be a graph, where $\mathbf{V}=\{V_{1},\dots, V_{p}\}$ is the set of nodes and $\mathbf{E}$ is the set of edges between the nodes. A path $\pi$ is a sequence of distinct nodes such that every pair of successive nodes are adjacent in $\mathcal{G}$. A path $\pi$  is directed if all arrows of edges along the path point towards the same direction.  A path between $(V_i,  V_j)$ is a back door path with respect to $V_i$ if it has an arrow into $V_i$. Given a path $\pi$, $V_{k}$ is a collider node on $\pi$ if there are two edges incident such that  $V_{i}\rightarrow V_{k} \leftarrow V_{j}$. In $\mathcal{G}$, if there exists $V_i\rightarrow V_j$, $V_i$ is a parent of $V_j$ and we use $\pa(V_j)$ to denote the set of all parents of $V_j$. If there exists a directed path from $V_i$ to $V_j$, then $V_i$ is an ancestor of $V_j$ and $V_j$ is a descendant of $V_i$.
	
	A DAG (Directed Acyclic Graph) is a directed graph without directed cycles.  If we interpret a node's parent as its direct cause, and a directed path as a causal path where the start node is an indirect cause of the end node, the DAG is known as a causal DAG. 
	

	%
	%
	%
	
	With the following two assumptions, a DAG links to a distribution. 
	
	\begin{assumption} [Markov condition~\cite{Pearl2009_Book}]
		\label{asm_Markovcondition}
		Given a DAG $\mathcal{G}=(\mathbf{V}, \mathbf{E})$ and $P(\mathbf{V})$, the joint probability distribution of $\mathbf{V}$, $\mathcal{G}$ satisfies the Markov condition if for $\forall V_i \in \mathbf{V}$, $V_i$ is independent of all non-descendants of $V_i$, given the parents of $V_i$.
	\end{assumption}
	
	
	\begin{assumption}[Faithfulness~\cite{Spirtes2000_Book}]
		A DAG $\mathcal{G}=(\mathbf{V}, \mathbf{E})$ is faithful to $P(\mathbf{V})$ iff every conditional independence presenting in $P(\mathbf{V})$  is entailed by $\mathcal{G}$, which fulfils the Markov condition. 
		\label{asm_Faithfulness}	
	\end{assumption}
	
	With the Markov condition and faithfulness assumptions, we can read the (in)dependencies between variables in $P(\mathbf{V})$ from a DAG using $d$-separation~\cite{Pearl2009_Book}. 
	
	\begin{definition} [$d$-Separation]
		A path $\pi$ in a DAG is $d$-separated by a set of nodes $\mathbf{S}$ if and only if: (1) $\mathbf{S}$ contains the middle node, $V_k$, of a chain $V_i \to V_k \to V_j$, $V_i \leftarrow V_k \leftarrow V_j$, or  $V_i \leftarrow V_k \to V_j$ in path $\pi$; (2) when path $\pi$ contains a collider $V_i \to V_k \leftarrow V_j$, node $V_k$ or any descendant of $V_k$ is not in $\mathbf{S}$. 
	\end{definition}
	
	
	
	%

The set $\mathbf{S}$ can be an empty set. A path is $d$-separated by an empty set if the path includes a collider. If a path is $d$-separated by $\mathbf{S}$, then $V_p \indep V_q \mid \mathbf{S}$ where $V_p$ and $V_q$ are start and end nodes of the path. If the path is not $d$-separated by any set including the empty set, then the path is $d$-connected and $V_p$ and $V_q$ are associated.	

	The causal (treatment) effect indicates the strength of a causal relationship. To define the causal effect, we introduce the concept of intervention, which forces a variable to take a value, often denoted by a \emph{do} operator ~\cite{Pearl2009_Book}. A \emph{do} operation mimics an intervention in a real world experiment. For example, $do(X=1)$ means $X$ is intervened to take value 1. $P(y \mid do (X=1))$ is an interventional probability. 
	
	The average treatment effect quantifies the treatment effect of a pair of variables in the population. 
	
	\begin{definition}[Average Treatment Effect (ATE)]
		\label{def:ATE}
		The average treatment effect of $X$ on $Y$  is defined as $\ate(X, Y) = P(Y=1 \mid do(X = 1)) - P(Y=1 \mid do(X = 0))$.    
	\end{definition}
	
The average treatment effect is the mean treatment effect in the population. However, the treatment effects of some subgroups may deviate greatly from the average treatment effect. The conditional treatment effect is used to capture the treatment effect in a subgroup. 

\begin{definition}[Conditional Average Treatment Effect (CATE)] 
	\label{def:CATE}
	Let $\mathbf{W}$ be a set of variables. Conditional average treatment effect of $X$ on $Y$ given $\mathbf{W=w}$ is defined as $\cate(X, Y; \mathbf{W=w}) = P(Y=1 \mid do(X = 1), \mathbf{W=w}) - P(Y=1 \mid do(X = 0), \mathbf{W=w})$.  		 
\end{definition}

To be an eligible condition, $\mathbf{W}$ usually does not include any descendant nodes of $X$ or $Y$.  
	
When there are multiple directed paths between $X$ and $Y$ including the path $X \to Y$, the variables in the directed paths apart from $X$ and $Y$ are mediators. Controlled direct effect measures the direct effect via edge $X \to Y$ when the mediators $\mathbf{Z}$ are kept constant to $\mathbf{z}$.

\begin{definition}[Controlled Direct Effect (CDE)~\cite{pearl2012causal}]
	\label{def:CDE}
	The controlled direct effect of $X$ on $Y$ when setting mediators between $(X, Y)$, denoted as $\mathbf{Z}$, as a constant value $\mathbf{z}$, i.e., $\mathbf{Z = z}$ is defined as: $\cde(X, Y; \mathbf{Z=z}) = P(Y \mid do (X=1), do (\mathbf{Z=z})) - P(Y \mid do (X=0), do (\mathbf{Z=z}))$.
\end{definition}

	To infer interventional probabilities (\emph{do} operators) by reducing them to normal conditional probabilities, a causal DAG is necessary.  The back door criterion, front door criterion or the $do$ calculus~\cite{Pearl2009_Book} can be used for the reduction.

\section{Assumptions, identifiability and algorithm}

\subsection{CDE, CATE and identifiability when alternating features as treatments}
\label{sec_DCE}	
	
Let us consider a typical machine learning problem setting where $\mathbf{X} = \{X_1, X_2, \dots, X_m\}$ is a set of features and $Y$ is the outcome. For simplicity of presentation, we assume $\mathbf{X}$ and $Y$ are binary. However, all conclusions in this section apply to both binary and continuous variables since our conclusions indicate that interventional probabilities and observational probabilities are exchangeable when the assumptions are satisfied regardless of variable types. For example, when $Y$ is continuous, $E(y \mid \mathbf{X=x})$ replaces probability $P(y \mid \mathbf{X=x})$ in binary formulae. 
%
We use $y$ to denote $Y=1$,  $x_i $ to represent $X_i=x_i$ and $do(x_i)$ to represent $do(X_i = x_i)$ when the context is clear.
	
	
	

Let $\mathbf{x} = (x_1, x_2, \ldots, x_m)$ represent an individual. For a personalised decision, there needs to estimate the causal effect of each $X_i \in \mathbf{X}$ on $Y$ when other variable values in $\mathbf{x}$ are kept unchanged for all $X_j \ne X_i$. This means that each $X_i$ is alternated as a treatment. The setting is different from normal causal effect estimation problem considering only a fixed treatment. Alternating features as treatments poses a challenge for estimating causal effects. An example is given below.

\begin{figure}[tb]
	\centering
	\begin{subfigure}[b]{0.22\textwidth}
		\centering
		\includegraphics[scale=0.2]{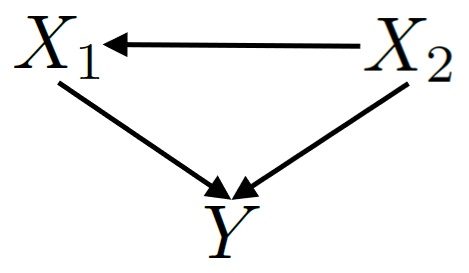}
		\caption{}
	\end{subfigure}
	\begin{subfigure}[b]{0.22\textwidth}
		\centering
		\includegraphics[scale=0.2]{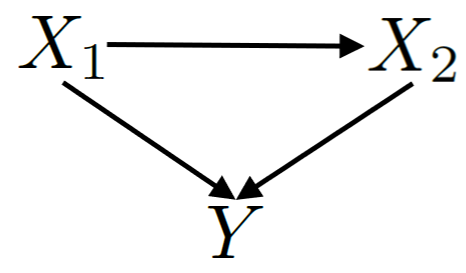}
		\caption{}
	\end{subfigure}
	\caption{Two exemplar causal DAGs.}
	\label{fig_example}
\end{figure}
	
	
	

\begin{example}[A challenge for measuring causal effect when alternating features as treatments] \label{ex_CATEvsCDE}
Consider the causal DAG underlying the data set as in Figure~\ref{fig_example}(a). $X_1$ and $X_2$ are two parents of $Y$.  For individual $\mathbf{x} = (x_1, x_2)$, we aim at estimating causal effect of $X_1$ on $Y$ ($X_1$ is the treatment) when $X_2 = x_2$ is kept unchanged and causal effect of $X_2$ on $Y$ ($X_2$ is the treatment) when $X_1 = x_1$ is kept constant. 


Let us consider $X_1$ as the treatment firstly. $\cate(X_1, Y ; X_2 = x_2) = P(y \mid do(X_1 = 1), x_2) - P(y \mid do(X_1 = 0), x_2))$ describes the conditional average treatment effect of $X_1$ on $Y$  conditioning on $X_2 = x_2$.  CATE has been proposed to describe causal heterogeneity (the average causal effects in different subgroups of the population)~\cite{abrevaya2015estimating,AtheyImbens2016_PNAS,wager2018estimation,Kuenzel2019}. CATE is mostly used for personalised decision making~\cite{zhang2020unified,zhang2017mining} and policy evaluation~\cite{Athey2017_Science}. Some researchers use it to approach individual treatment effect~\cite{Shalit2016,lu2018estimating}.

We now consider $X_2$ as the treatment. CATE cannot be used any more since $X_1$ is not a confounder of $(X_2, Y)$. $X_1$ is a mediator between $(X_2, Y)$. $\cde(X_2, Y; X_1 = x_1) = P(y \mid do(X_2 = 1), do(x_1)) - P(y \mid do(X_2 = 0), do(x_1))$ indicates the controlled direct effect of $X_2$ on $Y$ when $X_1$ is set to a fixed value $x_1$. CDE has been proposed for causal mediation analysis indicating the direct effect of a treatment on the outcome when mediators are kept constant~\cite{robins1992identifiability,DirectPaperR-273,vanderweele2011controlled,vanderweele2016mediation}.  CDE has been used for policy evaluation~\cite{robins1992identifiability,DirectPaperR-273} and political explanations~\cite{acharya2016explaining}.

Two different types of causal effects describe the effects we wish to capture. Their use depends on the underlying causal DAG. When we do not know the underlying DAG, we do not know which one to use. 
\end{example}


The difference between the two types of causal effects is in their conditional sets (apart from the treatment) of the conditional probabilities: $do$ free (observational) for CATE and with $do$ (interventional) for CDE. In general, the interventional probability is different from the observational probability and hence CATE and CDE are different.   

Fortunately, in some situations, CDE and CATE have the same estimate. There needs the following assumption. 

\begin{assumption}[Complete direct causes] All direct causes of the outcome $Y$ are included in $\mathbf{X}$.
	\label{asm_complete-Direct-Causes}
\end{assumption}




We show the consistency between the two types of causal effects in the following observation. We do not assume knowing the causal graph in the following discussions. 

\begin{observation}[CDE and CATE have the same estimate in data] Let a data set include variables $(X_1, X_2,  Y)$ where $X_1$ and $X_2$ are direct causes of $Y$. In data, we observe $X_1 \nindep Y$, $X_2 \nindep Y$ and $X_1 \nindep X_2$ where $\nindep$ means not independent (or associated). From the data, we can infer one of the two DAGs, Figure~\ref{fig_example}(a) or Figure~\ref{fig_example}(b), represents the underlying causal DAG that generates the data set. The edge direction between $X_1$ and $X_2$ cannot be inferred from the data. We aim at estimating the causal effect of $X_1$ on $Y$ when $X_2$ is kept constant at $x_2$. 
	
If Figure~\ref{fig_example}(a) is the underlying causal DAG, we estimate $\cate(X_1, Y; X_2 = x_2)$. If Figure~\ref{fig_example}(b) is the underlying causal  DAG, we estimate $\cde(X_1, Y; X_2 = x_2)$. When $X_1$ and $X_2$ are direct causes of $Y$, $\cate(X_1, Y; X_2 = x_2)$ in DAG(a) has the same value as $\cde(X_1, Y; X_2 = x_2)$ in DAG(b). Both estimates are $P(y \mid X_1 = 1,  x_2) - P(y \mid X_1 = 0,  x_2)$. 
\end{observation}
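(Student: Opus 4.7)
The plan is to verify the equality by separately computing the two causal effects in their respective DAGs and observing both reduce to the same observational contrast $P(y \mid X_1{=}1,\, x_2) - P(y \mid X_1{=}0,\, x_2)$. The heart of the argument is that in either DAG, $X_1$ and $X_2$ together are all the parents of $Y$, so the generative mechanism $P(y \mid X_1, X_2)$ coincides with the interventional distribution obtained by setting $Y$'s parents.

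First I would treat DAG(a), in which $X_2$ is a common parent of $X_1$ and $Y$. Here the only back-door path from $X_1$ to $Y$ is $X_1 \leftarrow X_2 \to Y$, which is blocked by conditioning on $X_2$. Hence $\{X_2\}$ satisfies the back-door criterion for the pair $(X_1, Y)$, and by back-door adjustment $P(y \mid do(X_1), x_2) = P(y \mid X_1, x_2)$. Substituting into Definition~\ref{def:CATE} gives $\cate(X_1, Y; X_2{=}x_2) = P(y \mid X_1{=}1, x_2) - P(y \mid X_1{=}0, x_2)$.

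Next I would treat DAG(b), in which $X_1 \to X_2$ makes $X_2$ a mediator. The quantity of interest is $\cde(X_1, Y; X_2{=}x_2) = P(y \mid do(X_1), do(X_2))$. After intervening on both $X_1$ and $X_2$, all incoming edges to these two variables are severed, leaving $Y$ with its two parents $X_1, X_2$ detached from each other. By the Markov condition applied to the mutilated graph (equivalently, by Rule~2 of $do$-calculus, since $Y$ is $d$-separated from the interventions given the values of its parents in the mutilated graph), $P(y \mid do(X_1), do(X_2)) = P(y \mid X_1, X_2)$. Hence $\cde(X_1, Y; X_2{=}x_2) = P(y \mid X_1{=}1, x_2) - P(y \mid X_1{=}0, x_2)$, which matches the CATE expression from DAG(a).

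The main subtlety I expect to watch out for is justifying the two $do$-to-conditional reductions without assuming that the full causal DAG is known. Assumption~\ref{asm_complete-Direct-Causes} (complete direct causes) is doing the real work here: it guarantees that $\{X_1, X_2\} = \pa(Y)$ in whichever DAG underlies the data, so that conditioning on $(X_1, X_2)$ captures the full mechanism for $Y$ and no unobserved direct cause confounds the last step. I would therefore make explicit at the start that under Assumption~\ref{asm_complete-Direct-Causes} there are no hidden parents of $Y$, which is exactly what is needed for the back-door adjustment in DAG(a) and for the parent-intervention identity in DAG(b) to be valid with an observed adjustment set.
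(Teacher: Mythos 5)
Your proposal is correct and follows essentially the same route as the paper: both reduce the interventional quantities to the observational contrast $P(y \mid X_1{=}1, x_2) - P(y \mid X_1{=}0, x_2)$ by exploiting that $\{X_1, X_2\}$ are all the parents of $Y$, the paper via explicit applications of Rule~2 of the $do$-calculus in the mutilated graphs and you via the (equivalent) conditional back-door adjustment for DAG(a) and a single parent-intervention/truncated-factorization step for DAG(b) in place of the paper's two sequential Rule~2 steps. Your explicit remark that Assumption~\ref{asm_complete-Direct-Causes} is what licenses these reductions matches the paper's stated reliance on that assumption.
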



\begin{proof}	
Let $\mathcal{G}$ be a DAG, $\mathcal{G}_{\underline{X_1}}$ be a DAG by removing from $\mathcal{G}$ outgoing edges from $X_1$, and  $\mathcal{G}_{\overline{X_1}\underline{X_2}}$ be a DAG by removing from $\mathcal{G}$ incoming edges to $X_1$ and outgoing edges from $X_2$. In the following deduction, Assumption~\ref{asm_complete-Direct-Causes} is assumed. 

Assume that DAG (a) is the underlying causal DAG $\mathcal{G}$. We have the following deduction.  Rules refer those in Theorem 3.4.1~\cite{Pearl2009_Book}. 
\begin{eqnarray*}
	&P(y \mid do(x_1), x_2)	 & \\
	&= P(y \mid x_1, x_2) &  \because Y \indep X_1 \mid X_2~in~\mathcal{G}_{\underline{X_1}}~(Rule~2)
\end{eqnarray*}

Assume that DAG (b) is the underlying causal DAG $\mathcal{G}$. We have the following deduction.
\begin{eqnarray*}
&P(y \mid do(x_1), do(x_2))	 & \\
&= P(y \mid do(x_1), x_2) &  \because Y \indep X_2 \mid X_1~in~\mathcal{G}_{\overline{X_1}\underline{X_2}}~(Rule~2)  \\
&= P(y \mid x_1, x_2) &  \because Y \indep X_1 \mid X_2~in~\mathcal{G}_{\underline{X_1}}~(Rule~2)
\end{eqnarray*}

Therefore,  $\cate(X_1, Y; X_2 = x_2) = P(y \mid X_1 = 1, x_2) - P(y \mid X_1 = 0, x_2)$ and $\cde(X_1, Y; X_2 = x_2) = P(y \mid X_1 = 1,  x_2) - P(y \mid X_1 = 0,  x_2)$. Both have the same value. 
\end{proof}

      
The above observation shows that, when all direct causes are conditioned, intervention (with $do$) and observation ($do$-free) are exchangeable. We have the following lemma to show this. Let $\pa(Y)$ denote all the parents (direct causes) of $Y$ in the underlying causal DAG.

\begin{lemma}\label{lemma-Exchangability}[Exchangeability of intervention and observation] Under Assumptions~\ref{asm_Markovcondition}, \ref{asm_Faithfulness} and~\ref{asm_complete-Direct-Causes}, let $\pa(Y) = \{X_1, X_2,  \ldots, X_k,  \ldots,X_p\}$ where the index of variables is arbitrary ($X_1$ is assumed the treatment) and $k$ is a value in $[2, p]$. $P(y \mid do(x_1), do(x_2), \ldots do(x_k), x_{k+1},  \ldots, x_p) = P(y \mid x_1, x_2, \ldots x_k, x_{k+1}, \ldots, x_p)$. 
	\end{lemma}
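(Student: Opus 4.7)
The plan is to peel off the $do$ operators one at a time, applying Rule~2 of the $do$-calculus at each step, in the order $j = k, k-1, \ldots, 1$. At the $j$-th step I invoke the action/observation exchange with the already-intervened variables $\{X_1, \ldots, X_{j-1}\}$ playing the role of the ``$\mathbf{X}$'' in Rule~2, the single variable $\{X_j\}$ playing the role of ``$\mathbf{Z}$'', and $\{X_{j+1}, \ldots, X_p\}$ playing the role of ``$\mathbf{W}$''. After $k$ such applications every $do(x_i)$ has been converted into the observation $x_i$, which is precisely the right-hand side of the lemma.

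The core of the proof is verifying the $d$-separation premise of Rule~2 at every step, namely $Y \indep X_j \mid \mathbf{X} \setminus \{X_j\}$ in $\mathcal{G}_{\overline{X_1, \ldots, X_{j-1}}\underline{X_j}}$. Consider any path $\pi$ from $X_j$ to $Y$ in this mutilated graph. Since every outgoing edge from $X_j$ is erased by $\underline{X_j}$ (in particular the direct edge $X_j \to Y$), $\pi$ must leave $X_j$ along an incoming arrow. By Assumption~\ref{asm_complete-Direct-Causes} every direct cause of $Y$ lies in $\mathbf{X}$, so the final edge of $\pi$ must be of the form $X_l \to Y$ with $X_l \in \pa(Y) \setminus \{X_j\}$. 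Because this terminal arrow points out of $X_l$, the node $X_l$ cannot be a collider on $\pi$, and because $X_l$ lies in the conditioning set $\mathbf{X} \setminus \{X_j\}$ it blocks $\pi$. Every such path is blocked, so the required $d$-separation, and hence the conditional independence via the Markov condition (Assumption~\ref{asm_Markovcondition}), holds.

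The main obstacle is really nothing more than keeping the graphical bookkeeping consistent: one must be sure that the last node before $Y$ on any remaining $X_j$-to-$Y$ path lies in $\mathbf{X}$, because only then is the blocking non-collider guaranteed to be conditioned on. Assumption~\ref{asm_complete-Direct-Causes} is precisely the hypothesis that excludes the bad case in which some hidden direct cause could sit between $X_j$ and $Y$. Faithfulness (Assumption~\ref{asm_Faithfulness}) is not actually invoked in this direction; only the Markov condition is used, to translate $d$-separation into conditional independence. A minor check is that the order in which the $do$s are peeled off does not matter: at each step the freshly mutilated graph is defined from the original $\mathcal{G}$ using the current ``$\mathbf{X}$'' and ``$\mathbf{Z}$'' sets, and the same $d$-separation argument applies with no dependence on what was done in earlier steps.
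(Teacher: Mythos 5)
Your overall strategy is exactly the one the paper uses: peel off the $do$ operators one at a time with Rule~2 of the $do$-calculus, applied in the mutilated graphs $\mathcal{G}_{\overline{X_1},\ldots,\overline{X_{j-1}},\underline{X_j}}$, with the correct role assignment for $\mathbf{X}$, $\mathbf{Z}$ and $\mathbf{W}$ at each step. You in fact go further than the paper, which simply asserts the $d$-separation premise at each step, whereas you try to verify it; your observation that only the Markov direction (and not faithfulness) is needed here is also correct.

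The gap is in that verification. The claim that ``the final edge of $\pi$ must be of the form $X_l \to Y$'' does not follow from Assumption~\ref{asm_complete-Direct-Causes}. That assumption only guarantees that when a path \emph{does} terminate in an arrow into $Y$, the penultimate node is an observed parent and hence lies in the conditioning set $\pa(Y)\setminus\{X_j\}$; it does not exclude paths whose last edge is $Y \to C$ for a child $C$ of $Y$, e.g.\ $X_j \leftarrow U \to C \leftarrow Y$ with $U$ hidden. Such paths exist and must be blocked by a different argument: traverse $\pi$ from the $Y$ end and follow the maximal directed segment $Y \to V_n \to \cdots \to V_m$ along the path. If this segment reaches $X_j$, then $\mathcal{G}$ contains a directed path from $Y$ to $X_j$, which together with $X_j \to Y$ violates acyclicity; otherwise the segment terminates at a collider $V_m$ which, being a descendant of $Y$, has no descendant (itself included) in the conditioning set $\pa(Y)\setminus\{X_j\}$, since a descendant of $Y$ that is also a parent of $Y$ would again create a cycle. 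Hence every such path is blocked and the required $d$-separation does hold, so your proof is repairable by adding this case; as written, however, the ``core'' step you identify rests on a non sequitur.
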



\begin{proof}
We first show how to reduce $do(x_k)$ to $x_k$.
Let $\mathcal{G}$ be the causal DAG underlying data, and $\mathcal{G}_{\overline{X_1}, \overline{X_2}, \ldots, \overline{X_{k-1}}, \underline{X_k}}$ be a DAG removing from $\mathcal{G}$ all incoming edges into $X_1, X_2, \ldots, X_{k-1}$ and outgoing edges from $X_k$.  Rules refer those in Theorem 3.4.1~\cite{Pearl2009_Book}. 
\begin{eqnarray*}
	&P(y \mid do(x_1), do(x_2), \ldots do(x_k), x_{k+1},  \ldots, x_p)	 & \\
	&= P(y \mid do(x_1), do(x_2), \ldots do(x_{k-1}), x_k, x_{k+1},  \ldots, x_p) &  \\
	& \because Y \indep X_k \mid x_1, x_2, \ldots x_{k-1}, x_{k+1}, \ldots, x_p & \\
	&in~\mathcal{G}_{\overline{X_1}, \overline{X_2}, \ldots, \overline{X_{k-1}}, \underline{X_k}}~(Rule~2) &
\end{eqnarray*}

Repeat the above deduction for $X_{k-1}, \ldots, X_2, X_1$ respectively, we obtain. $P(y \mid do(x_1), do(x_2), \ldots, do(x_{k-1}), \\x_k, \ldots, x_p) = P(y \mid x_1, x_2, \ldots x_{k-1}, x_k, \ldots, x_p)$. 

The lemma is proven. 
	\end{proof}

Using $do$ or $do$-free in the conditional set for causal effect estimation is determined  by the underlying causal DAG, specifically, paths among parents of $Y$. Recall our discussions in Example~\ref{ex_CATEvsCDE}. When $X_1$ is the treatment, $P(y \mid do(x_1), do(x_2))$  is used since $X_2$ is a descendant of $X_1$. 

Lemma~\ref{lemma-Exchangability} identifies a case where paths  among parents do not affect CATE and CDE estimation since $do$ and $do$-free are exchangeable. Lemma~\ref{lemma-Exchangability} implies that CATE and CDE estimation within $\pa(Y)$ are consistent. In other words, even without knowing the paths among direct causes of $Y$, the causal effects estimated based on CATE and CDE are identical when all direct causes are known and used. 





In the above discussions, $\mathbf{X}$ includes only the parents of $Y$. Let us consider the general case where $\mathbf{X}$ includes all variables. The conditional set of CATE does not allow the descendants of $Y$ and hence CATE is not applicable for the general case. In Definition~\ref{def:CDE}, the controlled variables of CDE are restricted to mediators since the motivation is for mediation analysis. If the controlled variables are relaxed to all variables in the system except the treatment, $\cde$ can be explained in a hypothetical experiment setting~\cite{Pearl2009_Book}. When all other variables are set as $\mathbf{X}{\backslash X_i} = \mathbf{x'}$, and only $X_i$ is changed from 0 to 1, $\cde$ of $X_i$ on $Y$ is measured as the change of $Y$, i.e., $\cde(X_i, Y; \mathbf{X} \backslash X_i = \mathbf{x'}) = P(y \mid do (X_i=1), do (\mathbf{x'})) - P(y \mid do (X_i=0), do (\mathbf{x'}))$.

We have the following lemma on the identifiability of CDE. 

\begin{lemma}[Identifiability of CDE] \label{lemma_DCE-Identifibility} 
Let $\{\mathbf{X}, Y\}$ include all variables in the system, and $\mathbf{X'} = \mathbf{X} \backslash X_i$.  $\mathbf{x'}$ is a value of $\mathbf{X'}$.	
Under Assumptions \ref{asm_Markovcondition}, \ref{asm_Faithfulness} and~\ref{asm_complete-Direct-Causes}. For each $X_i \in \pa(Y)$ and $\mathbf{Z} = \pa(Y) \backslash X_i$,  $\cde(X_i, Y;  \mathbf{X'=x'}) = P(y \mid X_i=1, \mathbf{Z=z}) - P(y \mid X_i=0, \mathbf{Z=z})$.  For each $X_i \notin \pa(Y)$, $\cde(X_i, Y;  \mathbf{X'=x'}) = 0$. 
 \end{lemma}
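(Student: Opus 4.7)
The plan is to handle the two cases separately, and in each case reduce the fully-intervened expression $P(y \mid do(x_i), do(\mathbf{x'}))$ to the claimed form by two moves: first use Rule 3 of $do$-calculus to strip out interventions on variables that are not parents of $Y$ (since their paths to $Y$ are cut once all parents are intervened on), and then invoke Lemma~\ref{lemma-Exchangability} to swap the remaining $do$'s on $\pa(Y)$ for ordinary conditioning.

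For the case $X_i \in \pa(Y)$, I would partition $\mathbf{X'} = \mathbf{Z} \cup \mathbf{W}$ with $\mathbf{Z} = \pa(Y) \setminus X_i$ and $\mathbf{W} = \mathbf{X} \setminus \pa(Y)$, and work in the mutilated DAG $\mathcal{G}_{\overline{X_i},\overline{\mathbf{Z}},\overline{\mathbf{W}}}$. The key structural observation is that every variable in $\mathbf{X}$ has had its incoming edges removed, so no variable in $\mathbf{X}$ can be a collider on any path. Consequently, any path from $\mathbf{W}$ to $Y$ must terminate with an edge $P \to Y$ for some $P \in \pa(Y) = \{X_i\} \cup \mathbf{Z}$, where $P$ is a non-collider on the path; conditioning on $\{X_i\} \cup \mathbf{Z}$ therefore blocks the path. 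This yields $Y \indep \mathbf{W} \mid X_i, \mathbf{Z}$ in the mutilated graph, so Rule 3 gives $P(y \mid do(x_i), do(\mathbf{z}), do(\mathbf{w})) = P(y \mid do(x_i), do(\mathbf{z}))$. Applying Lemma~\ref{lemma-Exchangability} (with $k = p$, i.e., all of $\pa(Y)$ intervened) converts this to $P(y \mid x_i, \mathbf{z})$, from which the stated difference follows.

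For the case $X_i \notin \pa(Y)$, note that $\pa(Y) \subseteq \mathbf{X'}$. Working in $\mathcal{G}_{\overline{X_i},\overline{\mathbf{X'}}}$, the same collider-free argument applies: any path from $X_i$ to $Y$ must end at some $P \in \pa(Y) \subseteq \mathbf{X'}$ as a non-collider, and conditioning on $\mathbf{X'}$ blocks it. Hence $Y \indep X_i \mid \mathbf{X'}$ in the mutilated graph, and Rule 3 eliminates $do(x_i)$ entirely, making $P(y \mid do(x_i = 1), do(\mathbf{x'})) = P(y \mid do(\mathbf{x'})) = P(y \mid do(x_i = 0), do(\mathbf{x'}))$, so $\cde(X_i, Y; \mathbf{X'=x'}) = 0$.

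The only delicate step is the d-separation verification in the mutilated graphs, but it collapses to a clean observation: interventions on all of $\mathbf{X}$ strip every $\mathbf{X}$-node of incoming edges, ruling out colliders within $\mathbf{X}$, so every $\mathbf{W}$-to-$Y$ (or $X_i$-to-$Y$) path necessarily passes through a parent of $Y$ as a non-collider and is blocked by conditioning on $\pa(Y)$. Assumption~\ref{asm_complete-Direct-Causes} is essential here — it guarantees that every path into $Y$ in the mutilated graph is funneled through the observed parent set, with no escape route through an unobserved direct cause.
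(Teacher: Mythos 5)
Your proof is correct and follows the same overall strategy as the paper's: use Rule~3 of the $do$-calculus to eliminate the interventions on non-parents of $Y$, then invoke Lemma~\ref{lemma-Exchangability} to exchange the remaining $do$'s on $\pa(Y)$ for ordinary conditioning. Where you differ is in the decomposition for the Rule~3 step. The paper partitions $\mathbf{X}$ into ancestors, descendants, and the remaining associated variables, and peels off the interventions in three successive stages (first $do(\mathbf{x_I})$, then $do(\mathbf{x_D})$, then the $do$'s on non-parent ancestors, the last of which it justifies somewhat informally via the Markov condition in the original graph). You instead lump all non-parents into a single set $\mathbf{W}$ and remove their interventions in one Rule~3 application, justified by the structural observation that in the fully mutilated graph every $\mathbf{X}$-node is a root, so no $\mathbf{X}$-node can be a collider and every path into $Y$ is funneled through a parent as a non-collider (in fact, no path of length greater than one from $\mathbf{W}$ to $Y$ survives at all, since any intermediate node would need an incoming edge). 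Your version is more uniform and makes the $d$-separation check explicit in the correct mutilated graph, which is arguably tighter than the paper's treatment of the non-parent ancestors; the paper's version, in exchange, makes visible which structural role (ancestor, descendant, or neither) each discarded intervention plays. The $X_i \notin \pa(Y)$ case is handled identically in both.
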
 

\begin{proof}
Firstly, let $\mathbf{X = \{X_A, X_D, X_I\}}$ where $\mathbf{X_A}$ includes all ancestors of $Y$, $\mathbf{X_D}$ includes all descendants of $Y$, and $\mathbf{X_I}$ includes all other variables (i.e. a variable $I \in \mathbf{X_I}$ is not linked to $Y$ via a path, $I$ and $Y$ share a same descendant, or $I$ and $Y$ share a same ancestor.).    	

Let $\mathcal{G}$ be the causal DAG underlying data, $\mathcal{G}_{\overline{X_i}}$ be the DAG removing from $\mathcal{G}$ all incoming edges into $X_i$, and $\mathcal{G}_{\overline{X_i}, \overline{\mathbf{W}}}$ be the DAG removing from $\mathcal{G}$ all incoming edges into $X_i$ and into every variable in $\mathbf{W}$.  Rules refer to those in Theorem 3.4.1~\cite{Pearl2009_Book}. 

Let $\mathbf{X'_A} = \mathbf{X_A} \backslash X_i$ and $\mathbf{x'_A}$ be a value of $\mathbf{X'_A}$.  $\mathbf{x_D}$ and  $\mathbf{x_I}$ are values of $\mathbf{X_D}$ and $\mathbf{X_I}$ respectively. $\{\mathbf{x'_A}, \mathbf{x_D}, \mathbf{x_I}\} = \mathbf{x'}$ where $\mathbf{x'}$ is a value of $\mathbf{X'} = \mathbf{X} \backslash X_i$.  
\begin{eqnarray*}
	&P (y \mid do(x_i), do(\mathbf{x'})) & \\
	&= P(y \mid do(x_i), do(\mathbf{x'_A}), do(\mathbf{x_D}), do(\mathbf{x_I}))	 & \\
	&= P(y \mid do(x_i), do(\mathbf{x'_A}), do(\mathbf{x_D}))	   & \\
	& \because Y \indep \mathbf{X_I} \mid X_i, \mathbf{X'_A}, \mathbf{X_D}~in~\mathcal{G}_{\overline{X_i}, \overline{\mathbf{X'_A}}, \overline{\mathbf{X_D}}}~(Rule~3) & \\
	&= P(y \mid do(x_i), do(\mathbf{x'_A}))	   & \\
& \because Y \indep \mathbf{X_D} \mid X_i, \mathbf{X'_A}~in~\mathcal{G}_{\overline{X_i}, \overline{\mathbf{X'_A}}}~(Rule~3) & 
\end{eqnarray*}

Based on the Markov Assumption~\ref{asm_Markovcondition}, for any $X_j \notin \pa(Y)$ and $X_j \in \mathbf{X'_A}$, $X_j \indep Y \mid \pa(Y)$. Therefore, for $X_i \in \pa(Y)$, $P(y \mid do (x_i), do (\mathbf{X_A' = x_A'})) = P(y \mid do (x_i), do(\mathbf{Z= z})$.

According to Lemma~\ref{lemma-Exchangability}, $P(y \mid do (x_i), do(\mathbf{Z= z}) = P(y \mid x_i,  \mathbf{Z= z})$.

Therefore, $\cde(X_i, Y;  \mathbf{X'=x'}) = P(y \mid X_i=1, \mathbf{Z=z}) - P(y \mid X_i=0, \mathbf{Z=z})$. 

To prove for each $X_i \notin \pa(Y)$, $\cde(X_i, Y;  \mathbf{X'=x'}) = 0$. we consider three cases, $X_j \in \mathbf{X_I}$, $X_j \in \mathbf{X_A}$, and $X_j \in \mathbf{X_D}$. In all three cases, we have the following deduction. 
\begin{eqnarray*}
	&P (y \mid do(x_i), do(\mathbf{X' = x'})) = P (y \mid do(\mathbf{X' = x'})) & \\
	& \because Y \indep X_i \mid  \mathbf{X'}~in~\mathcal{G}_{\overline{X_i}, \overline{\mathbf{X'}}}~(Rule~3) & 
\end{eqnarray*}
Therefore, $\cde(X_i, Y;  \mathbf{X'=x'}) = 0$

The Lemma is proven.
%
	\end{proof}

Lemma~\ref{lemma_DCE-Identifibility} shows that CDE of features on $Y$ can be estimated from data if the set of direct causes (or parents) of  $Y$ is known.  Only a parent of $Y$ has a non-zero CDE on $Y$ and hence we will focus on CDEs of $\pa(Y)$.

In some data sets, especially with high dimensionality, knowing direct causes may not be easy. A number of methods have been proposed to learn concepts from predictive models and data~\cite{kim2018interpretability,goyal2019explaining,yao2022concept}. Raw features can be generalised to concepts that are easy to be understood by users. Using the concepts, the direct causes are easier identified than using raw features.

\subsection{Identifying direct causes in data}
\label{sec-identifyingDirectCauses}

When the parents of $Y$ are unknown, they will need to be found in the data. The following assumption is needed to find the parents of $Y$ in data. 

\begin{assumption}[No descendants of $Y$] $\mathbf{X}$ contains no descendants of $Y$.  Or, $\mathbf{X}$ contains no variables affected by the outcome $Y$. 
	\label{asm_pretreatment}
\end{assumption}


From the data, only a Complete Partially Directed Acyclic (CPDAG)~\cite{Spirtes2010_Intro,Spirtes2000} can be learned. A CPDAG is a graph representing an equivalence class of DAGs that encode the same conditional independencies. An example of CPDAGs is shown in Figure~\ref{fig_Equivalence}. 
%
In general, it is impossible to learn a unique DAG from data since from data alone it is impossible to orient all the edges. Note that, with some other assumptions, such as linear functions and non-Gaussian noises~\cite{shimizu2006linear} or additive non-linear Gaussian noises~\cite{hoyer2008nonlinear}, causal directions can be identified. In this paper, we do not make such assumptions.  

\begin{figure}[t]
	\centering
	\includegraphics[scale=0.2]{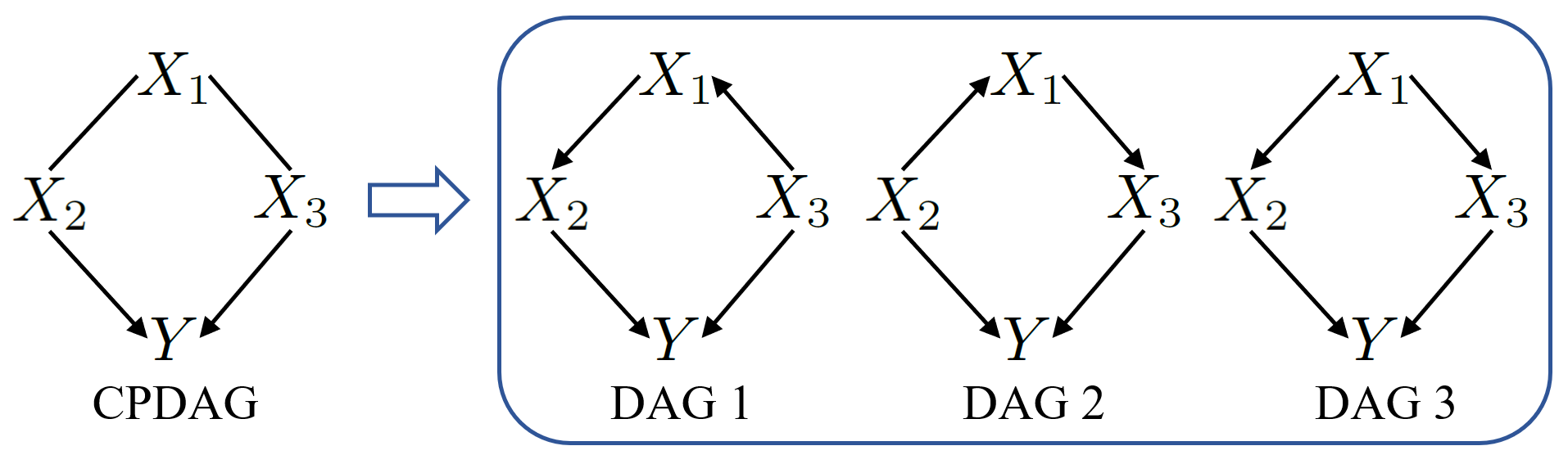}
	\caption{A CPDAG (left) represents an equivalence class of DAGs (right). Only the CPDAG is identifiable from data.}
	\label{fig_Equivalence}
\end{figure}

In practice, we do not assume that there is a known underlying causal DAG or that a unique DAG can be learned from the data. However, given the assumptions, the direct causes can be identified in data.  

\begin{lemma}[Identifiability of direct causes of $Y$ in data]~\label{lemma-CauseIdentifibility} Under Assumptions \ref{asm_Markovcondition}, \ref{asm_Faithfulness}, \ref{asm_complete-Direct-Causes}, \ref{asm_pretreatment}, and there are no statistical errors in conditional independence tests, the direct causes of $Y$ (i.e., $\pa(Y)$) can be identified in data. 
\end{lemma}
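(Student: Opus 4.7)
The plan is to reduce identifying $\pa(Y)$ to identifying the skeleton neighborhood of $Y$, and then use Assumption~\ref{asm_pretreatment} to argue every neighbor must be oriented into $Y$. Concretely, under Assumptions~\ref{asm_Markovcondition} and \ref{asm_Faithfulness}, the underlying causal DAG $\mathcal{G}$ and the observational distribution $P(\mathbf{V})$ share exactly the same set of conditional independencies, so the \emph{skeleton} of $\mathcal{G}$ (the undirected adjacency structure) is determined by $P$: two variables $V_i, V_j$ are adjacent in $\mathcal{G}$ if and only if $V_i \nindep V_j \mid \mathbf{S}$ for every $\mathbf{S} \subseteq \mathbf{V} \setminus \{V_i, V_j\}$. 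Given the assumption that conditional independence tests return the truth (no statistical errors), this adjacency relation is computable from data, e.g., by the adjacency phase of the PC algorithm. Hence the set $N(Y)$ of variables adjacent to $Y$ in $\mathcal{G}$ is identifiable from the data.

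The second step is to show $N(Y) = \pa(Y)$. Clearly $\pa(Y) \subseteq N(Y)$ by definition of adjacency. For the reverse inclusion, take any $X_i \in N(Y)$. Since $X_i$ and $Y$ are adjacent, the edge between them is either $X_i \to Y$ or $Y \to X_i$ in $\mathcal{G}$. The latter would make $X_i$ a descendant of $Y$, contradicting Assumption~\ref{asm_pretreatment}. Hence $X_i \to Y$, i.e., $X_i \in \pa(Y)$. Combined with Assumption~\ref{asm_complete-Direct-Causes}, which guarantees $\pa(Y) \subseteq \mathbf{X}$ so that no direct cause is hidden from the search, we obtain $\pa(Y) = N(Y) \cap \mathbf{X} = N(Y)$, which is identifiable from data.

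The only subtle step is the very first one: formally appealing to the fact that, under Markov plus faithfulness, adjacency in $\mathcal{G}$ is equivalent to the non-existence of a $d$-separating set, which in turn (by faithfulness) is equivalent to the non-existence of a conditional independence witness in $P$. This is a standard result used in the correctness proof of PC-style skeleton discovery, so I would just cite it rather than reprove it. The main thing to emphasize is that we do \emph{not} need to orient any edges elsewhere in the graph, nor do we need a unique DAG; the asymmetric assumption (no descendants of $Y$ in $\mathbf{X}$) does all the orientation work we need, and it does so locally at $Y$.
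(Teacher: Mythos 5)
Your proof reaches the right conclusion and is structurally the same as the paper's: both characterize $\pa(Y)$ as exactly those variables that remain dependent on $Y$ under every conditioning set drawn from $\mathbf{X}$, and both use Assumption~\ref{asm_pretreatment} to eliminate the only other candidates. One step of your justification does not quite cover the paper's setting, however. You ground the first step in the correctness of PC-style skeleton recovery, i.e., the general equivalence ``adjacent in $\mathcal{G}$ iff no conditional-independence witness exists among the measured variables.'' That general result requires causal sufficiency, whereas this paper explicitly allows hidden variables (so long as they are not direct causes of $Y$); with latents present, two non-adjacent observed variables can share a latent common cause and be inseparable by any \emph{observed} conditioning set, so the skeleton over $\mathbf{X} \cup \{Y\}$ is not recoverable in general. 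The reason the argument still works for the specific pairs $(X_j, Y)$ with $X_j \notin \pa(Y)$ is that Assumption~\ref{asm_complete-Direct-Causes} makes $\pa(Y)$ fully observed, so the Markov condition yields $X_j \indep Y \mid \pa(Y)$ for every non-descendant $X_j$, and Assumption~\ref{asm_pretreatment} guarantees every non-parent in $\mathbf{X}$ is a non-descendant; hence an observed witness set always exists for non-parents. This is precisely what the paper's proof does: rather than invoking skeleton recovery globally, it exhibits $\pa(Y)$ as the separating set. In your write-up Assumption~\ref{asm_complete-Direct-Causes} is credited only with ensuring no parent is missing from the search space; its second, equally essential role is to make the local separating set observable in the presence of latent variables, and you should state the adjacency criterion only for pairs involving $Y$ rather than citing the causally sufficient version wholesale.
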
 


\begin{proof}
With Assumptions~\ref{asm_Markovcondition} and~\ref{asm_Faithfulness}, there will be no any other variables $d$-separating a parent node from $Y$. When there are no statistical errors in conditional independence tests, a parent node, say $X_i$, will be dependent with $Y$ conditioning on any subset of $\mathbf{X} \backslash X_i$. With Assumption~\ref{asm_pretreatment}, no other variable (apart from parents) has such a property since the variable will be independent of $Y$ given the set of parents according to the Markov condition. Therefore, the set of parents can be identified using conditional independence tests.  

	%

\end{proof}

\begin{algorithm}[tb]
	\caption{Building a \textbf{M}odel for \textbf{O}utcome prediction and C\textbf{D}E \textbf{E}stimation (MODE)} 	\label{alg_What-if}
	{\textbf Input}: a data set generated from the underlying causal DAG with variables $(\mathbf{X}, Y)$ satisfying the Assumption~\ref{asm_complete-Direct-Causes} and~\ref{asm_pretreatment}.  An instance $\mathbf{x}$ for what-if questions. Parameters $k$ and $\delta$. \\
	{\textbf Output}: Predicted outcome for $\mathbf{x}$ and top $k$ features with the highest $\cde$s. 
	
	
	\begin{algorithmic}[1]
		\STATE {Discover $\pa(Y)$ in $\mathbf{X}$.}
		\STATE {Project the data set to $(\pa(Y), Y)$.}
		\STATE {Build a model $f(\pa(Y))$ using a machine learning method within the projected data set.} 
		\STATE {$y_{\mathbf{x}} = f(\mathbf{x'})$~~$//$ Outcome prediction for $\mathbf{x}$. where $\mathbf{x'}$ includes values of $\mathbf{x}$ within $\pa(Y)$}
		\FOR {each $X_i \in \pa(Y)$}
		\STATE {Let $\mathbf{Z} = \pa(Y) \backslash X_i$ and $(x_i, \mathbf{z})$ contain the values of $\mathbf{x}$ within $\pa(Y)$.} \\
		\COMMENT {Case 1: Binary treatment and outcome}\\  			
		\STATE {Let $P_f (y \mid x_i, \mathbf{z})$ be the probability of $f(\pa(Y))$} classifying instance $(x_i, \mathbf{z})$ as $y$, the class of interest. 		
		\STATE $\cde_{x_i} (X_i, Y; \mathbf{z}) = P_f (y \mid \overline{x}_i, \mathbf{z}) - P_f (y \mid x_i, \mathbf{z})$ \\
		\COMMENT {Case 2: Continuous treatment and binary outcome}\\  			
		\STATE {$\cde_{x_i} (X_i, Y; \mathbf{z}) = P_f (y \mid x_i + \delta, \mathbf{z}) - P_f (y \mid x_i, \mathbf{z})$} \\
		\COMMENT {Case 3: Binary treatment and continuous outcome}\\  						
		\STATE {Let $f(x_i, \mathbf{z})$ be the output of $f(\pa(Y))$} at input $(x_i, \mathbf{z})$. 		
		\STATE {$\cde_{x_i} (X_i, Y; \mathbf{z}) = f(\overline{x}_i, \mathbf{z}) - f(x_i, \mathbf{z})$} \\
		\COMMENT {Case 4: Continuous treatment and continuous outcome}\\  						
		\STATE {$\cde_{x_i} (X_i, Y;  \mathbf{z}) = f(x_i + \delta, \mathbf{z}) - f(x_i, \mathbf{z})$}
		\ENDFOR
		\STATE {Sort $\cde_{x_i} (X_i, Y;  \mathbf{z})$ for all $X_i \in \pa(Y)$}
		\STATE Return $y_{\mathbf{x}}$ and top $k$ parents of $Y$ with the highest $\cde$s.
	\end{algorithmic}
\end{algorithm}

\subsection{Linking a predictive model for causal effect estimation}
\label{sec_model-buildinbg}
		
CDEs can be estimated by conditional probabilities when the set of parent of $Y$ is known. A predictive model can model conditional probabilities very well, and hence can be used to estimate CDEs. To estimate CDEs using a predictive model, we make the following assumption.



\begin{assumption}[Consistent model]\label{asm-Perfect} Model $Y = f(\pa(Y))$ is trained in an unbiased sample of a population. $P(Y \mid \pa(Y))$ obtained from the model is the same as the conditional probability of $Y$ in the population. 
\end{assumption}

With Assumption~\ref{asm-Perfect}, conditional probability read from a predictive model is the  conditional probability in the population. 

Now, we can build a predictive model for estimating CDE of each feature in the model on the outcome $Y$ and for predicting the outcome given an input instance. The proposed algorithm, MODE, is shown in Algorithm~\ref{alg_What-if}. The model makes an outcome prediction for an input instance and provides a list of $k$ potential treatment variables for the individual to change the outcome with the highest effects.      

Lines 1 and 2 of MODE discover parents of $Y$ and project the data set to the parents of $Y$ and $Y$.   A local causal structure learning algorithms, such as Hiton PC~\cite{Aliferis:2010}, MMPC~\cite{Aliferis:2010} and PC-Select~\cite{PC-select-2010} can be used to learn $\pc(Y)$, the set of all Parents and Children of $Y$, in data. Then with Assumptions~\ref{asm_complete-Direct-Causes} and~\ref{asm_pretreatment}, $\pa(Y) = \pc(Y)$.

Lines 3 and 4  build a predictive model $f(\pa(Y))$ using the features including parents of $Y$ and $Y$ and predict the outcome for input instance $\mathbf{x}$. Any supervised machine learning model can be used to build the model. The usage of the model for outcome prediction is the same as with a normal predictive model. 

Lines 5-11 estimate the CDE for every feature in $\pa(Y)$ on $Y$ using Lemma~\ref{lemma_DCE-Identifibility} for various variable types of the feature and outcome. The conditional probabilities used in CDE estimation are obtained from the predictive model $f(\pa(Y))$. 

The estimated $\cde$ can be used for answering a what-if question: what $Y$ will change if $X_i = x_i$ is changed to $ \overline{x}_i$, i.e., the opposite value of $x_i$ (or $X_i = x_i + \delta$ for a continuous treatment) given other values of $\mathbf{x}$ are kept constant?

In the causal inference convention, $X_i = 0$ represents no treatment (control) and $X_i = 1$ indicates treatment (treated). In a what-if question, the control is the given value $x_i$ and the treated is the changed value (e.g. to the opposite value of $x_i$ ). Therefore, in the algorithm,  $\cde$ includes a subscript indicating the control $x_i$. 

Lines 14 and 15 sort the features in $\pa(Y)$ based on their CDEs and output the predicted outcome of $\mathbf{x}$ and the top $k$ features with the highest CDEs.

When the assumptions are satisfied, a trained predictive model can be used for $\cde$ estimation without accessing the data (data is only needed in the training stage) in a way similar to the outcome prediction for an input instance.

\subsection{A robustness analysis of a causally interpretable model}
\label{sec_robustness}

\begin{figure}[t]
	\centering
	\begin{subfigure}[b]{0.30\textwidth}
		\centering
		\includegraphics[scale=0.22]{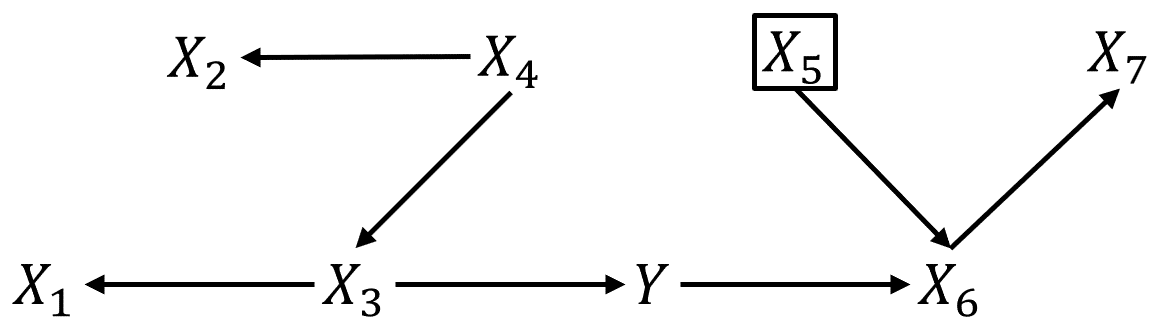}
		\caption{}
	\end{subfigure}
	\begin{subfigure}[b]{0.30\textwidth}
		\centering
		\includegraphics[scale=0.22]{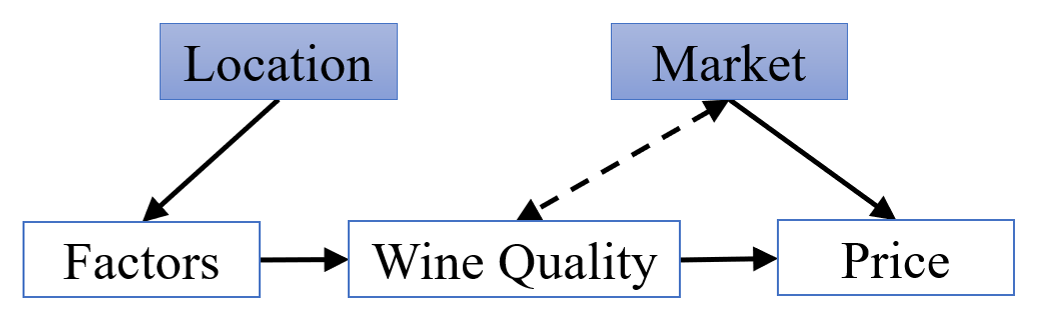}
		\caption{}
	\end{subfigure}
	\caption{(a) An example shows different types of features that are associated with $Y$ in a causal graph.  $X_5$ in the box is not associated with $Y$. However, when conditioned on $X_6$, $X_5$ becomes associated with $Y$. This association is called a spurious association. (b) An example shows how a spurious association is used in a predictive model. Gradient filled boxes represent unobserved variables.}
	\label{fig_winequality}
\end{figure}

Previous two subsections show that a predictive model built on all direct causes (parents) can be used to estimate $\cde$ of any direct cause on $Y$. A model built on all direct causes are hence causally interpretable. Some readers might think that its performance is inferior to that of a predictive model built on all features associated with $Y$ including descendants of $Y$. We will analyse this through another aspect of a predictive model -- robustness.  

A model built on $\pa(Y)$ will be more robust than a model built on all features associated with $Y$. The robustness of a model means the capability of the model to make reliable predictions in a new environment.

In a DAG of features set $\mathbf{X}$ and an outcome $Y$, three types of features will be associated with $Y$: 1) the ancestors of $Y$ including the parents of $Y$; 2) the descendants of $Y$; and 3) the descendant nodes  of the ancestors of $Y$. We show the three types of variables in Figure~\ref{fig_winequality}(a). $X_3$ and $X_4$ belong to type 1; $X_6$ and $X_7$ belong to type 2; and $X_1$ and $X_2$ belong to type 3. When $\pa(Y)$ is used to build a predictive model, other ancestors of $Y$ apart from $\pa(Y)$ and the descendant nodes of all ancestors of $Y$ (type 2) do not help building the model since they are independent of $Y$ given $\pa(Y)$ based on the Markov condition (Assumption~\ref{asm_Markovcondition}). Therefore, we will discuss in the following whether we should use the descendants of $Y$ for building a predictive model.  

When using the descendants of $Y$ (type 3) for building a predictive model, other variables that are not associated with $Y$ will be included in the model and this may lead to non-robustness of the model. 

We first explain how a spurious association is produced by including a descendant variable of $Y$ in a model.  Let us look at the DAG in Figure~\ref{fig_winequality}(a), where $X_6$ is a child node of $Y$. $X_5$ is a parent node of $X_6$. Edges $Y \to X_6$ and $X_5 \to X_6$ form a collider at $X_6$, and hence $Y$ and $X_5$ are associated when conditioning on $X_6$. This association is spurious since $X_5$ and $Y$ are actually independent. So, when $(X_3, X_6)$ are used to build a model, $X_5$ is unintentionally included in the model.

We use an example to demonstrate the impact of a spurious association on the model robustness.  An example including a spurious association in a predictive model is given in Figure~\ref{fig_winequality}(b). $Y$ is Wine Quality.  Factors are parent of $Y$,  Price is the child node of $Y$, Location is an ancestor of $Y$ (not a parent), and Market is a parent node of Price . Both Location and Market are unobserved. Price is associated with Wine Quality strongly and hence seems to help predict Wine Quality. However, Market also affects Price. When Price is used to predict Wine Quality, Market is unintentionally introduced into the model because of the spurious association. For a bottle of wine with the same quality, its price can be quite different in two markets. So a model built in a data set from a market including Price as a feature may not work in another data set from a different market, and therefore the model is not robust.

The lack of robustness of a model cannot be tested in the model building process using normal cross validation. For example, if the data for building a model is collected in one market, the market is the same for both training and test data sets. Then the model performance will be consistent in both training and test data sets. However, the model will fail when it is used in a different market.

In contrast, a model built using all direct causes will not be affected by the unobserved variables and is hence robust. Location is an unobserved cause of Factors. Location affects Wine Quality by changing Factors and the changes due to the location are reflected in Factors.  When conditioning on Factors, Location is independent of  Wine quality and hence does not contribute to predicting $Y$. Therefore, a model using Factors will give consistent predictions in different locations.

In summary, we have the following conclusion. We say that an unobserved variable $U$ is involved in model $Y = f(\mathbf{X})$ if $U \indep Y$ but $U \nindep Y \mid \mathbf{X'}$ where $\mathbf{X'} \subseteq \mathbf{X}$. An unknown variable set $\mathbf{U}$ is involved  in model $Y = f(\mathbf{X})$ if some $U \in \mathbf{U}$ are involved in the model.   

\begin{lemma}[Involvement of unobserved variables in a predictive model]\label{lemma_Involovement}
	Let the feature set $\mathbf{X}$ include $\{\mathbf{X_A, X_D,  X_I}\}$ that are associated with $Y$. In the causal graph underlying the data set, $(\mathbf{X_A, X_D, X_I})$ consist of the ancestors of $Y$, the descendants of $Y$, and the descendants of the ancestors of $Y$ respectively. $\{\mathbf{X_A, X_D,  X_I}\}$ have unobserved parents $\{\mathbf{U_A, U_D,  U_I}\}$ respectively. Let $\mathbf{X_A}$ contains all direct parents of $Y$. $\mathbf{X_I}$ and $(\mathbf{X_A} \backslash \pa(Y))$ (`$\backslash$' denotes set difference) are ignored when $\pa(Y)$ is used in a predictive model. $\mathbf{U_D}$ is involved in model $Y = f(\mathbf{\pa(Y), X_D})$, but $\mathbf{U_A}$, $\mathbf{U_D}$ or $\mathbf{U_I}$  is not involved in model $Y = f(\mathbf{\pa(Y)})$.      
\end{lemma}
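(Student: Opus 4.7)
The plan is to reduce both claims to d-separation questions in the underlying causal DAG, using the Markov condition (Assumption~\ref{asm_Markovcondition}) and faithfulness (Assumption~\ref{asm_Faithfulness}) to translate graphical criteria into (in)dependencies. The key observation throughout is that ``involvement'' of an unobserved $U$ is a collider-opening phenomenon: $U \indep Y$ marginally, but $U \nindep Y$ after conditioning on some features in the model.

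For the first claim, I would pick an arbitrary $U \in \mathbf{U_D}$, which by construction is a parent of some $X_D \in \mathbf{X_D}$. Since $X_D$ is a descendant of $Y$, the DAG contains a directed path $Y \to \cdots \to X_D$, and together with the edge $U \to X_D$ this produces a path between $U$ and $Y$ whose only intermediary collider is $X_D$. Marginally the collider blocks this path, so (absent other connections) $U \indep Y$. Taking $\mathbf{X'} = \{X_D\}$ inside the feature set $\pa(Y) \cup \mathbf{X_D}$ of $f$ conditions on the collider and d-connects the path, and faithfulness then yields $U \nindep Y \mid X_D$, showing that $U$ is involved.

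For the second claim, I would case-split on how each $U$ relates to $Y$. Any $U \in \mathbf{U_A}$ lies on a directed path $U \to X_A \to \cdots \to Y$, so faithfulness gives $U \nindep Y$ marginally and the first condition of ``involved'' already fails; the same reasoning disposes of any $U \in \mathbf{U_D} \cup \mathbf{U_I}$ that happens to be a descendant of $Y$. In the remaining case, $U$ is a non-descendant of $Y$, and the local Markov property yields $U \indep Y \mid \pa(Y)$. To upgrade this to $U \indep Y \mid \mathbf{X'}$ for every $\mathbf{X'} \subseteq \pa(Y)$, I would argue that any collider $V$ lying on a $U$-to-$Y$ path cannot have a descendant inside $\pa(Y)$: otherwise a directed chain $U \to V \to \cdots \to X \to Y$ with $X \in \pa(Y)$ would exist and, by faithfulness, violate $U \indep Y$ marginally. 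Hence no subset of $\pa(Y)$ can open such a collider, and $U$ is not involved in $f(\pa(Y))$.

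The main obstacle is the quantification over subsets $\mathbf{X'}$ in the definition of ``involved'': the Markov condition supplies independence only conditional on the full parent set, whereas the definition allows any subset. Handling this requires the acyclicity-based collider argument above to rule out the pathological case where conditioning on a proper subset of $\pa(Y)$ secretly opens a collider on a $U$-to-$Y$ path. Once that subtlety is dispatched, the rest is routine path analysis, and the Figure~\ref{fig_winequality} example furnishes a concrete template (with $U = \text{Market}$ and $X_D = \text{Price}$) that mirrors the general collider-opening argument used in Claim~1.
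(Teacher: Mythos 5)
Your proposal follows the same core idea as the paper's proof for the positive claim: $\mathbf{U_D}$ becomes dependent on $Y$ because conditioning on $\mathbf{X_D}$ opens the colliders formed by the directed paths $Y \to \cdots \to X_D$ and the edges $U \to X_D$, exactly as in the paper (and as in the Market/Price example). For the negative claim you are actually more careful than the paper. The paper only invokes the Markov condition to obtain $\mathbf{U_A} \indep Y \mid \pa(Y)$, $\mathbf{U_I} \indep Y \mid \pa(Y)$ and $\mathbf{U_D} \indep Y \mid \pa(Y)$, i.e.\ independence given the \emph{full} set $\pa(Y)$; since the definition of involvement quantifies over every subset $\mathbf{X'} \subseteq \pa(Y)$, this leaves a quantifier gap that you correctly identify and attack. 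You also dispose of $\mathbf{U_A}$ by noting its members are ancestors of $Y$ and hence marginally dependent on $Y$, so the first conjunct of ``involved'' already fails --- a legitimate alternative to the paper's route through the second conjunct. One step of your subset argument is stated too strongly: a collider $V$ on a $U$--$Y$ path whose descendant lies in $\pa(Y)$ does not by itself yield a directed chain $U \to V \to \cdots \to Y$, because the segment of the path from $U$ to $V$ need not be directed. The repair is to examine the collider nearest $U$: its collider-free initial segment is either a directed path out of $U$ (making $U$ an ancestor of $Y$ through that collider) or contains a fork node that is a common ancestor of $U$ and of the collider, hence of $Y$; in either case faithfulness forces $U \nindep Y$ marginally, contradicting the hypothesis, so no subset of $\pa(Y)$ can open such a path. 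With that patch your argument is complete and in fact tightens a point the paper's own proof glosses over.
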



\begin{proof}
	
Based on the Markov condition (Assumption~\ref{asm_Markovcondition}), $\mathbf{X_I} \indep Y | \pa(Y)$ and $(\mathbf{X_A} \backslash \pa(Y)) \indep Y \mid \pa(Y)$. Therefore,  $\mathbf{X_I}$ and $(\mathbf{X_A} \backslash \pa(Y))$  are ignored in building a predictive model that includes $\pa(Y)$.
	
$f(\mathbf{X_A, X_D})$ uses conditional probability $P(y \mid (\mathbf{X_A, X_D}))$ for predictions. In data, $\mathbf{U_D} \indep Y$. Directed paths from $Y$ to $\mathbf{X_D}$ and directed paths from $\mathbf{U_D}$ to $\mathbf{X_D}$ form colliders at nodes $\mathbf{X_D}$. Hence,  $\mathbf{U_D} \nindep Y \mid \mathbf{X_D}$. Therefore, $\mathbf{U_D}$ is involved in model $f(\mathbf{X_A, X_D})$.

$f(\pa(Y))$ uses conditional probability $P(y \mid (\pa(Y))$ for predictions. $\mathbf{U_A} \indep Y \mid \pa(Y)$ and $\mathbf{U_I} \indep Y  \mid \pa(Y)$  based on the Markov condition (Assumption~\ref{asm_Markovcondition}).  Hence, $\mathbf{U_A}$ and $\mathbf{U_I}$ are not involved in model $f(\pa(Y))$. Furthermore, $\mathbf{U_D} \indep Y \mid \pa(Y)$, and hence $\mathbf{U_D}$ is not involved in model $f(\mathbf{X_A})$.
	
The Lemma is proven. 	
\end{proof}

Lemma~\ref{lemma_Involovement} indicates the predictive model using the parents of $Y$ only will not involve other unobserved variables in the system, and hence a model is robust to the unobserved changes. In contrast, using descendant variables of the outcome in a model risks involving unobserved variables (spurious associations) in the model, and may lead to underperformance of the predictive model in a new environment. 

In causal feature selection literature~\cite{yu2020causality,YuUnifiedFeatureSelection2021}, the Markov blanket is considered the optimal feature set for classification. Markov blanket of the outcome $Y$ includes the parents (direct causes) of $Y$ and the children (variables directly affected by the outcome) of $Y$, and the parents of every child of $Y$. Predictive models built on Markov blanket have been shown robust against the environment changes~\cite{yang2023learning}. When all variables in the Markov blanket are observed, a model built on the Markov blanket is accurate and robust. We consider a general case, where some parents of some children of $Y$ may not be observed (such as the Market variable in Figure~\ref{fig_winequality}(b)). In this case, using variables affected by the outcome will lead to the non-robustness of a model. Note that unobserved variables are very common in practice, and hence not using variables affected by the outcome in model building is a sound suggestion.

\begin{table*}[t]
	\setlength\tabcolsep{1.0pt}
	{\footnotesize \begin{tabular}{ccccccc|ccccc}
			\toprule
			1 & MLP         & GBR         & LR          & DT          & RF          & SGD         & L-DML      & K-DML      & CF          & L-DRL       & F-DRL       \\ \midrule
			\multirow{2}{*}{2k}  & 30.5 $\pm$ 10.7 & 34.2 $\pm$ 11.6 & 30.2 $\pm$ 12.1 & 35.5 $\pm$ 13.8 & 33.9 $\pm$ 11.7 & 32.2 $\pm$ 11.1 & 26.2 $\pm$ 8.1 & 38.3 $\pm$ 10.0  & 30.9 $\pm$ 12.4 & 30.0 $\pm$ 10.7   & 31.5 $\pm$ 11.2 \\
			& \cellcolor{lightgray}11.1 $\pm$ 9.9  & \cellcolor{lightgray}12.3 $\pm$ 10.5 & \cellcolor{lightgray}12.3 $\pm$ 8.7  & \cellcolor{lightgray}13.5 $\pm$ 11.5 & \cellcolor{lightgray}14.0 $\pm$ 10.6   & \cellcolor{lightgray}12.9 $\pm$ 9.7  & \cellcolor{lightgray}8.2 $\pm$ 6.0    & \cellcolor{lightgray}16.0 $\pm$ 9.8   & \cellcolor{lightgray}12.0 $\pm$ 12.4   & \cellcolor{lightgray}11.2 $\pm$ 10.8 & \cellcolor{lightgray}13.2 $\pm$ 8.5  \\ \midrule
			\multirow{2}{*}{4k}  & 32.3 $\pm$ 11.0   & 28.5 $\pm$ 7.6  & 28.6 $\pm$ 7.9  & 29.9 $\pm$ 7.3  & 30.0 $\pm$ 8.5    & 31.6 $\pm$ 9.7  & 30.1 $\pm$ 8.2 & 34.7 $\pm$ 6.5 & 32.3 $\pm$ 7.8  & 27.3 $\pm$ 9.1  & 29.9 $\pm$ 7.6  \\
			& \cellcolor{lightgray}9.6 $\pm$ 6.5   & \cellcolor{lightgray}7.1 $\pm$ 5.0     & \cellcolor{lightgray}7.4 $\pm$ 5.1   & \cellcolor{lightgray}8.0 $\pm$ 6.2     & \cellcolor{lightgray}8.8 $\pm$ 7.1   & \cellcolor{lightgray}10.2 $\pm$ 6.5  & \cellcolor{lightgray}9.2 $\pm$ 7.5  & \cellcolor{lightgray}8.1 $\pm$ 6.3  & \cellcolor{lightgray}9.1 $\pm$ 5.3   & \cellcolor{lightgray}10.2 $\pm$ 6.5  & \cellcolor{lightgray}7.0 $\pm$ 4.5     \\ \midrule
			\multirow{2}{*}{6k}  & 29.4 $\pm$ 6.2  & 27.6 $\pm$ 5.2  & 30.6 $\pm$ 8.3  & 29.4 $\pm$ 7.2  & 30.4 $\pm$ 8.0    & 29.4 $\pm$ 8.7  & 28.0 $\pm$ 7.7   & 33.3 $\pm$ 5.5 & 27.9 $\pm$ 6.7  & 28.1 $\pm$ 6.1  & 30.2 $\pm$ 9.2  \\
			& \cellcolor{lightgray}7.2 $\pm$ 6.0     & \cellcolor{lightgray}5.5 $\pm$ 3.8   & \cellcolor{lightgray}8.2 $\pm$ 6.6   & \cellcolor{lightgray}7.0 $\pm$ 6.0       & \cellcolor{lightgray}8.8 $\pm$ 6.5   & \cellcolor{lightgray}6.8 $\pm$ 6.0     & \cellcolor{lightgray}6.8 $\pm$ 5.9  & \cellcolor{lightgray}7.3 $\pm$ 7.0    & \cellcolor{lightgray}6.9 $\pm$ 5.0     & \cellcolor{lightgray}6.7 $\pm$ 5.3   & \cellcolor{lightgray}7.5 $\pm$ 6.1   \\ \midrule
			\multirow{2}{*}{8k}  & 29.0 $\pm$ 5.9    & 29.9 $\pm$ 6.2  & 29.9 $\pm$ 6.9  & 30.2 $\pm$ 6.0    & 29.4 $\pm$ 4.6  & 31.1 $\pm$ 7.5  & 29.7 $\pm$ 5.8 & 32.8 $\pm$ 5.5 & 29.3 $\pm$ 5.7  & 31.7 $\pm$ 4.4  & 31.0 $\pm$ 5.3    \\
			& \cellcolor{lightgray}6.8 $\pm$ 5.8   & \cellcolor{lightgray}5.7 $\pm$ 4.4   & \cellcolor{lightgray}7.0 $\pm$ 4.7     & \cellcolor{lightgray}5.5 $\pm$ 4.2   & \cellcolor{lightgray}5.2 $\pm$ 3.3   & \cellcolor{lightgray}8.8 $\pm$ 7.7   & \cellcolor{lightgray}6.8 $\pm$ 4.7  & \cellcolor{lightgray}6.4 $\pm$ 5.5  & \cellcolor{lightgray}5.7 $\pm$ 3.9   & \cellcolor{lightgray}4.7 $\pm$ 4.4   & \cellcolor{lightgray}5.8 $\pm$ 5.2   \\ \midrule
			\multirow{2}{*}{10k} & 29.2 $\pm$ 5.4  & 30.6 $\pm$ 4.9  & 29.8 $\pm$ 7.1  & 29.6 $\pm$ 5.8  & 29.7 $\pm$ 5.1  & 30.5 $\pm$ 6.4  & 28.7 $\pm$ 6.4 & 31.5 $\pm$ 4.6 & 29.0 $\pm$ 6.6    & 30.0 $\pm$ 4.6    & 28.7 $\pm$ 5.6  \\
			& \cellcolor{lightgray}5.1 $\pm$ 4.4   & \cellcolor{lightgray}4.4 $\pm$ 3.6   & \cellcolor{lightgray}7.3 $\pm$ 4.8   & \cellcolor{lightgray}4.7 $\pm$ 3.8   & \cellcolor{lightgray}5.8 $\pm$ 4.1   & \cellcolor{lightgray}5.6 $\pm$ 5.4   & \cellcolor{lightgray}5.7 $\pm$ 3.9  & \cellcolor{lightgray}5.7 $\pm$ 4.1  & \cellcolor{lightgray}6.4 $\pm$ 4.9   & \cellcolor{lightgray}4.1 $\pm$ 3.8   & \cellcolor{lightgray}4.6 $\pm$ 3.9   \\ \midrule
			\multirow{2}{*}{20k} & 30.6 $\pm$ 4.0    & 28.7 $\pm$ 4.1  & 29.8 $\pm$ 4.2  & 28.7 $\pm$ 3.5  & 29.2 $\pm$ 3.1  & 28.7 $\pm$ 6.7  & 29.9 $\pm$ 3.9 & 31.5 $\pm$ 3.4 & 29.9 $\pm$ 3.3  & 30.0 $\pm$ 3.5    & 29.2 $\pm$ 3.6  \\
			& \cellcolor{lightgray}5.4 $\pm$ 3.4   & \cellcolor{lightgray}4.0 $\pm$ 2.5     & \cellcolor{lightgray}3.8 $\pm$ 3.5   & \cellcolor{lightgray}3.1 $\pm$ 3.4   & \cellcolor{lightgray}3.4 $\pm$ 2.3   & \cellcolor{lightgray}5.4 $\pm$ 3.4   & \cellcolor{lightgray}3.8 $\pm$ 2.5  & \cellcolor{lightgray}4.4 $\pm$ 3.6  & \cellcolor{lightgray}3.2 $\pm$ 2.6   & \cellcolor{lightgray}4.0 $\pm$ 2.8     & \cellcolor{lightgray}3.5 $\pm$ 2.5   \\ \bottomrule \\
	\end{tabular}}
	
	{\footnotesize \begin{tabular}{ccccccc|ccccc}
			\toprule
			2 & MLP         & GBR         & LR         & DT         & RF         & SGD        & L-DML       & K-DML       & CF          & L-DRL       & F-DRL       \\ \midrule
			\multirow{2}{*}{2k}  & 27.9 $\pm$ 11.4 & 29.1 $\pm$ 12.7 & 27.9 $\pm$ 10.0  & 29.1 $\pm$ 11.0  & 28.0 $\pm$ 9.8   & 35.7 $\pm$ 13.0  & 29.8 $\pm$ 11.1 & 39.7 $\pm$ 10.4 & 27.2 $\pm$ 12.3 & 28.8 $\pm$ 11.7 & 32.4 $\pm$ 11.2 \\
			& \cellcolor{lightgray}14.1 $\pm$ 9.2  & \cellcolor{lightgray}13.4 $\pm$ 9.4  & \cellcolor{lightgray}11.1 $\pm$ 5.6 & \cellcolor{lightgray}12 $\pm$ 8.2   & \cellcolor{lightgray}11.7 $\pm$ 6.9 & \cellcolor{lightgray}14.7 $\pm$ 6.1 & \cellcolor{lightgray}13.4 $\pm$ 13.3 & \cellcolor{lightgray}16.9 $\pm$ 11.5 & \cellcolor{lightgray}12.9 $\pm$ 11.2 & \cellcolor{lightgray}11.8 $\pm$ 8.6  & \cellcolor{lightgray}10.4 $\pm$ 8.2  \\ \midrule
			\multirow{2}{*}{4k}  & 30.2 $\pm$ 10.7 & 29.2 $\pm$ 8.6  & 30.8 $\pm$ 8.9 & 30.2 $\pm$ 8.6 & 29.3 $\pm$ 8   & 32 $\pm$ 8.9   & 26.9 $\pm$ 8.0    & 35.9 $\pm$ 8.9  & 29.6 $\pm$ 8.9  & 27.6 $\pm$ 10.7 & 29.6 $\pm$ 9.6  \\
			& \cellcolor{lightgray}8.7 $\pm$ 7.6   & \cellcolor{lightgray}8.9 $\pm$ 6.3   & \cellcolor{lightgray}8.9 $\pm$ 6.5  & \cellcolor{lightgray}8.5 $\pm$ 6.2  & \cellcolor{lightgray}8.4 $\pm$ 5.5  & \cellcolor{lightgray}9.8 $\pm$ 7.1  & \cellcolor{lightgray}9.6 $\pm$ 7.5   & \cellcolor{lightgray}12.8 $\pm$ 8.8  & \cellcolor{lightgray}9.2 $\pm$ 5.1   & \cellcolor{lightgray}10.9 $\pm$ 9.4  & \cellcolor{lightgray}6.5 $\pm$ 5.1   \\ \midrule
			\multirow{2}{*}{6k}  & 30.9 $\pm$ 6.8  & 28.8 $\pm$ 6.5  & 29.7 $\pm$ 5.4 & 27.7 $\pm$ 5.3 & 30.8 $\pm$ 6.1 & 30.0 $\pm$ 8.9   & 31.3 $\pm$ 5.9  & 36.2 $\pm$ 6.7  & 27.5 $\pm$ 7.2  & 26.2 $\pm$ 6.7  & 27.1 $\pm$ 8.1  \\
			& \cellcolor{lightgray}6.5 $\pm$ 6.9   & \cellcolor{lightgray}5.8 $\pm$ 5.4   & \cellcolor{lightgray}4.4 $\pm$ 4.5  & \cellcolor{lightgray}5.4 $\pm$ 4.7  & \cellcolor{lightgray}6.5 $\pm$ 5.3  & \cellcolor{lightgray}7.6 $\pm$ 6.6  & \cellcolor{lightgray}6.5 $\pm$ 4.8   & \cellcolor{lightgray}10.3 $\pm$ 7.5  & \cellcolor{lightgray}7.7 $\pm$ 7.6   & \cellcolor{lightgray}7.4 $\pm$ 6.1   & \cellcolor{lightgray}6.7 $\pm$ 5.2   \\ \midrule
			\multirow{2}{*}{8k}  & 31.4 $\pm$ 7.7  & 29.2 $\pm$ 6.6  & 31.0 $\pm$ 6.5   & 30.0 $\pm$ 6.4   & 28.8 $\pm$ 6.1 & 32.8 $\pm$ 7.2 & 29.3 $\pm$ 6.6  & 32.4 $\pm$ 5.7  & 28.5 $\pm$ 6.6  & 28.9 $\pm$ 6.3  & 27.9 $\pm$ 5.8  \\
			& \cellcolor{lightgray}8.8 $\pm$ 5.6   & \cellcolor{lightgray}6.4 $\pm$ 4.6   & \cellcolor{lightgray}7.1 $\pm$ 5.9  & \cellcolor{lightgray}6.6 $\pm$ 4.8  & \cellcolor{lightgray}6.5 $\pm$ 4.5  & \cellcolor{lightgray}7.6 $\pm$ 6.0    & \cellcolor{lightgray}5.6 $\pm$ 4.0     & \cellcolor{lightgray}7.4 $\pm$ 4.1   & \cellcolor{lightgray}4.8 $\pm$ 3.6   & \cellcolor{lightgray}6.3 $\pm$ 5.1   & \cellcolor{lightgray}5.2 $\pm$ 3.5   \\ \midrule
			\multirow{2}{*}{10k} & 30.4 $\pm$ 6.2  & 30.9 $\pm$ 5.8  & 28.6 $\pm$ 5.4 & 28.0 $\pm$ 5.4   & 28.8 $\pm$ 4.6 & 28.1 $\pm$ 6.1 & 28.8 $\pm$ 5.9  & 32.3 $\pm$ 4.8  & 30.7 $\pm$ 4.5  & 29.0 $\pm$ 4.6    & 29.2 $\pm$ 5.7  \\
			& \cellcolor{lightgray}6.0 $\pm$ 4.7     & \cellcolor{lightgray}5.4 $\pm$ 3.6   & \cellcolor{lightgray}5.2 $\pm$ 4.2  & \cellcolor{lightgray}4.6 $\pm$ 3.9  & \cellcolor{lightgray}4.9 $\pm$ 3.6  & \cellcolor{lightgray}6.0 $\pm$ 3.9    & \cellcolor{lightgray}6.3 $\pm$ 4.8   & \cellcolor{lightgray}5.7 $\pm$ 5.1   & \cellcolor{lightgray}4.5 $\pm$ 3.0     & \cellcolor{lightgray}5.6 $\pm$ 3.5   & \cellcolor{lightgray}5.9 $\pm$ 4.5   \\ \midrule
			\multirow{2}{*}{20k} & 29.3 $\pm$ 4.5  & 29.2 $\pm$ 3.8  & 29.5 $\pm$ 3.9 & 30.0 $\pm$ 3.4   & 30.4 $\pm$ 2.8 & 29.8 $\pm$ 6.0   & 28.9 $\pm$ 3.9  & 33.4 $\pm$ 3.0    & 29.8 $\pm$ 3.1  & 30.1 $\pm$ 3.2  & 29.4 $\pm$ 4.1  \\
			& \cellcolor{lightgray}4.9 $\pm$ 3.6   & \cellcolor{lightgray}3.5 $\pm$ 2.6   & \cellcolor{lightgray}3.7 $\pm$ 2.9  & \cellcolor{lightgray}3.6 $\pm$ 2.7  & \cellcolor{lightgray}3.4 $\pm$ 2.5  & \cellcolor{lightgray}5.7 $\pm$ 4.9  & \cellcolor{lightgray}4.4 $\pm$ 2.9   & \cellcolor{lightgray}5.3 $\pm$ 3.6   & \cellcolor{lightgray}3.1 $\pm$ 2.2   & \cellcolor{lightgray}3.2 $\pm$ 2.4   & \cellcolor{lightgray}3.8 $\pm$ 3.3   \\ \bottomrule
	\end{tabular}}
	
	\caption{Biases of CDE and CATE estimations on two synthetic data sets in two tables. The rows with / without the background report results with parent / all variables as inputs.  Using parents produces lower biases than using all variables for all methods. The left hand side results with the background are CDE estimations from MODE with various machine learning methods. The right hand results are CATE estimations using the benchmark methods. The biases of MODE and benchmark methods are consistent when considering their confidence intervals.  
	}
	\label{tab:DCE}
\end{table*}

\section{Experiments}

The experiments will demonstrate the following main conclusions in the paper: 1) When the conditions are satisfied, any predictive model is capable of estimating the CDE of any feature on $Y$; 2) a causally interpretable predictive model built on all direct causes (parents of $Y$) will be more robust than a predictive model built on all features variables; 3) a causally interpretable predictive model is potentially useful for personalised decision making.  

\subsection{Capability for causal effect estimation}

Evaluating causal effect estimation is challenging since there are no ground truth causal effects in most data sets. We use two synthetic data sets for evaluations. The data generation details are in the following.  

$\mathcal{G}_1: U \to X_1 \to Y, X \to X_2 \to Y, U \to X_3 \to Y, U \to X_4 \to Y, U  \to X_5$ and $\mathcal{G}_2: U \to X_1 \to Y, X \to X_2 \to Y, U \to X_3 \to Y, U \to X_4 \to Y, U \to X_5, X_1 \to X_2 \to X_3,  X_4 \to X_3$ where $U$ is an unobserved variable. 

The data generation process for data sets from $\mathcal{G}_1$ is described as follows:\begin{gather*}	U \sim Normal(0,1);~~e \sim Normal(0,1);\\	X_1 \sim Binomial(1,\frac{1}{1+exp(-U)});\\	X_2 \sim Binomial(1,\frac{1}{1+exp(-U)});\\	X_3 \sim Binomial(1,\frac{1}{1+exp(-U)});\\	X_4 \sim Binomial(1,\frac{1}{1+exp(-U)});\\	X_5 \sim Binomial(1,\frac{1}{1+exp(-U)});\\	Y \sim 1 + 1.5X_1 + 1.5X_2 + 1.5X_3 + 1.5X_4 + e\end{gather*}

The data generation process for data sets from $\mathcal{G}_2$ is described as follows:\begin{gather*}	U \sim Normal(0,1);~~e \sim Normal(0,1);\\	X_1 \sim Binomial(1,\frac{1}{1+exp(-U)});\\	X_2 \sim Binomial(1,\frac{1}{1+exp(-U\times X_1)});\\	
	X_3 \sim Binomial(1,\frac{1}{1+exp(-U\times X_2\times X_4)});\\	
	X_4 \sim Binomial(1,\frac{1}{1+exp(-U)});\\	
	X_5 \sim Binomial(1,\frac{1}{1+exp(-U)});\\	Y \sim 1 + 1.5X_1 + 1.5X_2 + 1.5X_3 + 1.5X_4 + e\end{gather*}

Ground truth causal effect are calculated in data using \emph{do} calculus rules~\cite{Pearl2009_Book}.   

We instantiate MODE with various well known predictive machine learning models:  Linear Regression (LR)~\cite{gelman2006data},  Decision Tree (DT)~\cite{Quinlan93}, Random Forest (RF)~\cite{breiman2001random}, Stochastic Gradient Descent (SGD)~\cite{bottou2010large}, Multi-Layer Perception (MLP)~\cite{rumelhart1986learning} and Gradient Boosting for regression (GBR)~\cite{friedman2001greedy}. They are implemented in scikit-learn \cite{scikit-learn}.

We compare MODE with some well known CATE estimation methods: Linear Double Machine Learning (L-DML)~\cite{chernozhukov2016double},  Kernel Double Machine Learning (K-DML)~\cite{nie2021quasi}, Causal Forest Double Machine Learning (CF)~\cite{wager2018estimation}, Linear Doubly Robust Learning (L-DRL)~\cite{chernozhukov2016double}, and Forest Doubly Robust Learning (F-DRL)~\cite{athey2019generalized}. They are implemented in EconML\cite{econml}. When the assumptions are satisfied, the results of CDE and CATE are consistent based on our results, and hence the comparison is proper. Since these methods need a pre-specified treatment, we use $X_1$ as the treatment in all comparisons. 


The biases of estimated causal effects are listed in Table~\ref{tab:DCE}. Each estimate is the average in 30 data sets. We have the following conclusions.  

Firstly, when the conditions are satisfied, a predictive model can estimate causal effects as accurately as other tailor designed causal effect estimation methods, most of which were designed in recent years. When considering confidence intervals, the biases of the two groups of methods are consistent.

Secondly, the use of parents of $Y$ only as input variables improves causal effect estimations greatly for all methods. 

Thirdly, all methods are benefited from the increase in data set size.     

The results demonstrate that MODE with a predictive method is capable of estimating causal effects. 

\subsection{Robustness of a ML model using parents}

\begin{table}[t]
	\centering
	\setlength\tabcolsep{1.5pt}
	{\small\begin{tabular}{ccccccc}
			\toprule
			\multirow{2}{*}{Model} &  & \multicolumn{2}{c}{Same environments}      &  & \multicolumn{2}{c}{Changed environments}   \\ \cmidrule{3-4} \cmidrule{6-7} 
			&  & $0 \to 0$           & $1 \to 1$            &  & $0 \to 1$            & $1 \to 0$            \\ \midrule
			\multirow{2}{*}{MLP}   &  & 0.614 $\pm$ 0.024 & 0.614 $\pm$ 0.019 &  & 1.684 $\pm$ 0.123 & 1.551 $\pm$ 0.141 \\
			&  & \cellcolor{lightgray}0.999 $\pm$ 0.038 & \cellcolor{lightgray}0.997 $\pm$ 0.034 &  & \cellcolor{lightgray}1.000 $\pm$ 0.038     & \cellcolor{lightgray}0.997 $\pm$ 0.033 \\ \cmidrule{1-1} \cmidrule{3-4} \cmidrule{6-7} 
			\multirow{2}{*}{GBR}   &  & 0.816 $\pm$ 0.045 & 0.810 $\pm$ 0.042  &  & 3.817 $\pm$ 0.491 & 3.859 $\pm$ 0.591 \\
			&  & \cellcolor{lightgray}1.123 $\pm$ 0.053 & \cellcolor{lightgray}1.131 $\pm$ 0.034 &  & \cellcolor{lightgray}1.122 $\pm$ 0.051 & \cellcolor{lightgray}1.132 $\pm$ 0.033 \\ \cmidrule{1-1} \cmidrule{3-4} \cmidrule{6-7} 
			\multirow{2}{*}{LR}    &  & 0.612 $\pm$ 0.022 & 0.612 $\pm$ 0.027 &  & 1.566 $\pm$ 0.056 & 1.550 $\pm$ 0.078  \\
			&  & \cellcolor{lightgray}0.992 $\pm$ 0.037 & \cellcolor{lightgray}1.008 $\pm$ 0.045 &  & \cellcolor{lightgray}0.992 $\pm$ 0.036 & \cellcolor{lightgray}1.008 $\pm$ 0.045 \\ \cmidrule{1-1} \cmidrule{3-4} \cmidrule{6-7} 
			\multirow{2}{*}{DT}    &  & 0.885 $\pm$ 0.042 & 0.877 $\pm$ 0.057 &  & 4.478 $\pm$ 0.663 & 4.100 $\pm$ 0.443   \\
			&  & \cellcolor{lightgray}1.000 $\pm$ 0.040      & \cellcolor{lightgray}0.995 $\pm$ 0.027 &  & \cellcolor{lightgray}1.000 $\pm$ 0.041     & \cellcolor{lightgray}0.996 $\pm$ 0.028 \\ \cmidrule{1-1} \cmidrule{3-4} \cmidrule{6-7} 
			\multirow{2}{*}{RF}    &  & 0.756 $\pm$ 0.049 & 0.740 $\pm$ 0.037  &  & 3.957 $\pm$ 0.463 & 3.900 $\pm$ 0.614   \\
			&  & \cellcolor{lightgray}1.014 $\pm$ 0.033 & \cellcolor{lightgray}1.016 $\pm$ 0.040  &  & \cellcolor{lightgray}1.013 $\pm$ 0.034 & \cellcolor{lightgray}1.018 $\pm$ 0.040  \\ \cmidrule{1-1} \cmidrule{3-4} \cmidrule{6-7} 
			\multirow{2}{*}{SGD}   &  & 0.700 $\pm$ 0.062   & 0.701 $\pm$ 0.037 &  & 2.417 $\pm$ 0.526 & 2.531 $\pm$ 0.567 \\
			&  & \cellcolor{lightgray}1.003 $\pm$ 0.035 & \cellcolor{lightgray}1.002 $\pm$ 0.035 &  & \cellcolor{lightgray}1.002 $\pm$ 0.034 & \cellcolor{lightgray}1.002 $\pm$ 0.035 \\ \bottomrule
	\end{tabular}}
	\caption{Robustness of predictive models with the changed environments (right) benchmarked by without environment changes (left). The rows with / without the background indicate errors of models with parents / all variables as features. Error differences (right vs. left) of the models using parents are significantly smaller than those of the models using all variables.}
	\label{tab:Binary}
\end{table}

We test the robustness of models using parents as features using synthetic data sets. Data sets are generated using the causal DAG in Figure~\ref{fig_winequality}(a) with the following extensions: Factors contains 3 variables and they share unobserved variable $U_1$. The generation details are in the following.


\begin{gather*}	
	U_1 \sim Normal(0,1);~~U_2 \sim Binomial(0,0.5);\\	
	e_1 \sim Normal(0,1);~~e_2 \sim Normal(0,1);\\	
	X_1 \sim Binomial(1,\frac{1}{1+exp(-U_1)});\\	
	X_2 \sim Binomial(1,\frac{1}{1+exp(-U_1)});\\	
	X_3 \sim Binomial(1,\frac{1}{1+exp(-U_1)});\\	
	Y \sim 1 + 10X_1 + 10X_2 + 10X_3 + e_1;\\	
	P \sim 1 + 0.8*Y + 2*U_2 + e_2
\end{gather*}

 The environments are indicated by $U_2 = 0$ and $U_2 = 1$. ``Without environment changes" means that both training and test data sets are sampled from a data set of one environment, and ``with environment changes" means that training and test data sets are sampled from the data sets of two environments respectively.

Errors of six predictive models are shown in Table~\ref{tab:Binary}. With an unchanged environment, using all variables as features produces models with lower errors than using parents as features since the child node $X_2$ helps predict $Y$. However, when environments are changed, the performance of models using all variables deteriorates significantly, whereas the performance of models using parents remains at the same level. 

The results demonstrate that using parents only will produce robustness models that are not affected by the change of environments (unobserved variables).

We then test the robustness of models using parents as features using three real world data sets for lung cancer studies. Data sets came from the labs of Harvard~\cite{Harvarddata}, Michigan \cite{MichiganData}, and Stanford~\cite{StandfordData}. Data were collected from different patients in three different environments using different devices. Data contain gene expression values of thousands of genes and a class label with cancer and normal. 

The common genes are matched among three data sets and binarised by the median of each gene. 90 genes are selected based on the correlation with the class label using the Harvard data set. Michigan and Stanford data sets are projected to the same set of selected genes and are used as test data sets. 

Here is a summary of the data sets after the processing in the binary format. All data sets have 90 variables. The sizes of Harvard, Michigan, and Stanford data sets are 156 (17), 96 (10) and 46 (5) respectively where the number in the parentheses indicates the number of cancer cases in each data set.  

Models are built on the Harvard data set and are tested on three data sets. The training and test data sets are split based on the $70:30$ ratio. For the same environment, a test data set from Harvard is used. For the changed environments, test data sets (also sampled 30\% from each data set) from Michigan and Stanford are used. The number of repetitions is 30.  
The PC algorithm implemented in the causal discovery toolbox TETRAD~\cite{ramsey2018tetrad} is used to find parents in the Harvard data set. Three parents are found in the Harvard data set with the significant level $\alpha = 0.05$. 

The errors of some well know machine models on the same environment and changed environments are shown in Table~\ref{tab:Real}. 

By comparing the error differences between the changed and unchanged environment of each method, models using parents as features have significantly lower differences than models using all genes as features. This shows that the models using parents are more robust than models using all features.

\begin{table}[h]
	\setlength\tabcolsep{1pt}
	\centering
	{\small\begin{tabular}{cccccc}
			\toprule
			\multirow{2}{*}{Model} &  & Same              &  & \multicolumn{2}{c}{Changed}           \\ \cmidrule{3-3} \cmidrule{5-6} 
			&  & $\text{Harvard} \to \text{Harvard}$ &  & $\text{Harvard} \to \text{Michigan}$ & $\text{Harvard} \to \text{Stanford}$ \\ \midrule
			\multirow{2}{*}{MLP} &  & 0.0262 $\pm$ 0.0191   &  & 0.0631 $\pm$ 0.0435    & 0.0901 $\pm$ 0.0541  \\
			&  & \cellcolor{lightgray}0.0362 $\pm$ 0.0291   &  & \cellcolor{lightgray}0.0447 $\pm$ 0.0384    & \cellcolor{lightgray}0.0603 $\pm$ 0.0291  \\ \cmidrule{1-1} \cmidrule{3-3} \cmidrule{5-6} 
			\multirow{2}{*}{GBR} &  & 0.0184 $\pm$ 0.0165   &  & 0.0376 $\pm$ 0.0289    & 0.1050 $\pm$ 0.0340    \\
			&  & \cellcolor{lightgray}0.0447 $\pm$ 0.0252   &  & \cellcolor{lightgray}0.0284 $\pm$ 0.0196    & \cellcolor{lightgray}0.0716 $\pm$ 0.0368  \\ \cmidrule{1-1} \cmidrule{3-3} \cmidrule{5-6} 
			\multirow{2}{*}{LR}  &  & 0.0262 $\pm$ 0.0221   &  & 0.0291 $\pm$ 0.0220     & 0.1043 $\pm$ 0.0281  \\
			&  & \cellcolor{lightgray}0.0170 $\pm$ 0.0162    &  & \cellcolor{lightgray}0.0241 $\pm$ 0.0145    & \cellcolor{lightgray}0.0199 $\pm$ 0.0167  \\ \cmidrule{1-1} \cmidrule{3-3} \cmidrule{5-6} 
			\multirow{2}{*}{DT}  &  & 0.0362 $\pm$ 0.0275   &  & 0.1085 $\pm$ 0.0757    & 0.0858 $\pm$ 0.0456  \\
			&  & \cellcolor{lightgray}0.0461 $\pm$ 0.0231   &  & \cellcolor{lightgray}0.0362 $\pm$ 0.0327    & \cellcolor{lightgray}0.0525 $\pm$ 0.0374  \\ \cmidrule{1-1} \cmidrule{3-3} \cmidrule{5-6} 
			\multirow{2}{*}{RF}  &  & 0.0220 $\pm$ 0.0153    &  & 0.0376 $\pm$ 0.0260     & 0.1199 $\pm$ 0.0540   \\
			&  & \cellcolor{lightgray}0.0291 $\pm$ 0.0240    &  & \cellcolor{lightgray}0.0227 $\pm$ 0.0223    & \cellcolor{lightgray}0.0652 $\pm$ 0.0353  \\ \cmidrule{1-1} \cmidrule{3-3} \cmidrule{5-6} 
			\multirow{2}{*}{SGD} &  & 0.0426 $\pm$ 0.0262   &  & 0.0844 $\pm$ 0.0643    & 0.1234 $\pm$ 0.0866  \\
			&  & \cellcolor{lightgray}0.0369 $\pm$ 0.0301   &  & \cellcolor{lightgray}0.0418 $\pm$ 0.0328    & \cellcolor{lightgray}0.0496 $\pm$ 0.0359  \\ \bottomrule
	\end{tabular}}
	\caption{Errors of predictive models using all features (no background) / parents only (with the background) in three real world data sets obtained from different labs. The error difference between environment change and no environment change indicates the robustness of the model. The models using parents have significantly smaller differences than models using all features. }
	\label{tab:Real}
\end{table}

\subsection{Illustration for personalised decision making}

\begin{table}[t]
	\centering
	\setlength\tabcolsep{1.5pt}
	{\small\begin{tabular}{ccc}
			\toprule
			\multicolumn{3}{c}{A person from German Credit Data Set}                            \\ \midrule
			Original Outcome              &  & Bad credit at 95\% probability                     \\ \midrule
			\multirow{2}{*}{Feature}      &  & Savings.account.1 = 0                              \\
			&  & (Saving account no more than 100 DM)                  \\ 
			CDE                           &  & 0.93                                               \\ 
			\multirow{2}{*}{Intervention} &  & $do$(Savings.account.1 = 1)                          \\
			&  & (Make saving account \textgreater 100 DM) \\ 
			New Outcome                   &  & Good credit at 98\% probability                    \\ \midrule
			Feature                       &  & Housing.2 = 0 (Not own a house)                    \\ 
			CDE                           &  & 0.88                                               \\ 
			Intervention                  &  & $do$(Housing.2 = 1) (Own a house)                    \\ 
			New Outcome                   &  & Good credit at 93\% probability                    \\ \midrule
			Feature                       &  & Credit.history.5 = 0 (No other credits existing)   \\ 
			CDE                           &  & 0.85                                               \\ 
			Intervention                  &  & $do$(Credit.history.5 = 1) (Have other credits)      \\ 
			New Outcome                   &  & Good credit at 90\% probability                    \\ \bottomrule
	\end{tabular}}
	\caption{An illustration for personalised decision making using MODE. The second row shows the outcome predicted for a person. Three features (with the original input values) of the largest CDEs are shown. The person can change the credit rating by an intervention $do()$ on any feature. The outcome after an intervention is shown as New Outcome.}
	\label{tab:ExampleDecisionMaking}
\end{table}

For the purpose of illustrating how MODE can be used for personalised decision making, we assume the assumptions are satisfied in the German credit data set~\cite{Bache+Lichman:2013}. 
In the German credit data set~\cite{Bache+Lichman:2013}, each value in a categorical attribute is converted to a binary variable where 1 indicates the presence of the value. A numerical attribute is binarised by the median. In total, the data set includes 61 binary variables including the class variable. 
Seven parents are identified in the data set using PC-Select~\cite{PC-select-2010} with the significant level $\alpha=0.05$. A random forest of 500 trees is built on the data set. 


A case to illustrate MODE for personalised decision making is shown in Table~\ref{tab:ExampleDecisionMaking}. The person is predicted as ``bad credit", and the top three features with the highest CDEs give the person a choice to choose one to work on for improving his/her credit rating in future.

\section{Conclusion and discussion}

We have presented the assumptions and theories to support estimating the CDE of any feature on the outcome. Our assumptions relax the assumption of knowing the complete causal graph underlying the data set for estimating the causal effect of any feature on the outcome. We assume that the data set includes all direct causes of the outcome and no variables affected by the outcome. We allow hidden variables in the system as long as there are no hidden direct causes of $Y$.  Our assumptions support practical methods for estimating CDE of any feature on the outcome. We further show that when the conditions are satisfied, any type of predictive models can be used to estimate the CDE of any feature on the outcome.

The results in the paper make the predictive model satisfying the conditions identified by us causally interpretable since each feature's contribution to the outcome change in a circumstance (i.e., CDE) is estimable. Furthermore, we show such a causally interpretable model is robust in terms of making predictions in a new environment that is different from the one from which the model is trained.

The proposed approach relies on assumptions for its correctness, and the assumptions need domain knowledge to verify. Note that all causal effect estimation methods need some assumptions and the set of assumptions in this paper are more relaxed than assuming knowing the causal graph underlying the data.

The implication of the work in this paper is twofold. Firstly, this set of assumptions can be easily understood by users to enable them to know whether the causal effect can be estimated in data using a predictive model. Secondly, in many applications, users know most direct causes of the outcome, and a certain level of biases in estimated causal effects is acceptable. For example, in product recommendation, knowing whether an action has a positive effect on the sale is important already even without knowing the precise magnitude of the causal effect. In these applications, the method presented will assist users in both outcome prediction and finding suitable treatments (actions) for each individual to change the outcome. 

In summary, causally interpreting a predictive model needs domain knowledge. The results in this paper provide an easy means for integrating the top level domain knowledge (the data set containing all direct causes of the outcome and no variables affected by the outcome) instead of detailed domain knowledge (the complete causal  graph). The results enable human guided machine learning for building causally interpretable and robust models.

\bibliographystyle{abbrv}

\end{document}